\def\eqref#1{equation~\ref{#1}}
\def\1{\bm{1}}
\DeclareMathAlphabet{\mathsfit}{\encodingdefault}{\sfdefault}{m}{sl}
\SetMathAlphabet{\mathsfit}{bold}{\encodingdefault}{\sfdefault}{bx}{n}
\def\gN{{\mathcal{N}}}
\newcommand{\E}{\mathbb{E}}
\DeclareMathOperator*{\argmin}{arg\,min}
\newtheorem{theorem}{Theorem}
\newtheorem{lemma}{Lemma}
\newtheorem{proof}{Proof}
\newtheorem{corollary}{Corollary}
\begin{document}

\twocolumn[
\icmltitle{To be Robust or to be Fair: Towards Fairness in Adversarial Training}



\icmlsetsymbol{equal}{*}

\begin{icmlauthorlist}
\icmlauthor{Han Xu}{equal,to}
\icmlauthor{Xiaorui Liu}{equal,to}
\icmlauthor{Yaxin Li}{to}
\icmlauthor{Anil K. Jain}{to}
\icmlauthor{Jiliang Tang}{to}
\end{icmlauthorlist}

\icmlaffiliation{to}{Department of Computer Science and Engineering, Michigan State University, Michigan, U.S.}

\icmlcorrespondingauthor{Han Xu}{xuhan1@msu.edu}
\icmlcorrespondingauthor{Xiaorui Liu}{xiaorui@msu.edu}

\icmlkeywords{Machine Learning, ICML}

\vskip 0.3in
]



\printAffiliationsAndNotice{\icmlEqualContribution} 

\begin{abstract}
Adversarial training algorithms have been proved to be reliable to improve machine learning models' robustness against adversarial examples. However, we find that adversarial training algorithms tend to introduce severe disparity of accuracy and robustness between different groups of data. For instance, a PGD adversarially trained ResNet18 model on CIFAR-10 has 93\% clean accuracy and 67\% PGD $l_{\infty}$-8 robust accuracy on the class ''automobile'' but only 65\% and 17\% on the class ''cat''. This phenomenon happens in balanced datasets and does not exist in naturally trained models when only using clean samples. In this work, we empirically and theoretically show that this phenomenon can happen under general adversarial training algorithms which minimize DNN models' robust errors. Motivated by these findings, we propose a Fair-Robust-Learning (FRL) framework to mitigate this unfairness problem when doing adversarial defenses. Experimental results validate the effectiveness of FRL.
\end{abstract}

\section{Introduction}

\begin{figure*}[h]
\subfloat[Natural Training]{\label{clean}
\begin{minipage}[c]{0.24\textwidth}
\centering
\includegraphics[width = 1\textwidth]{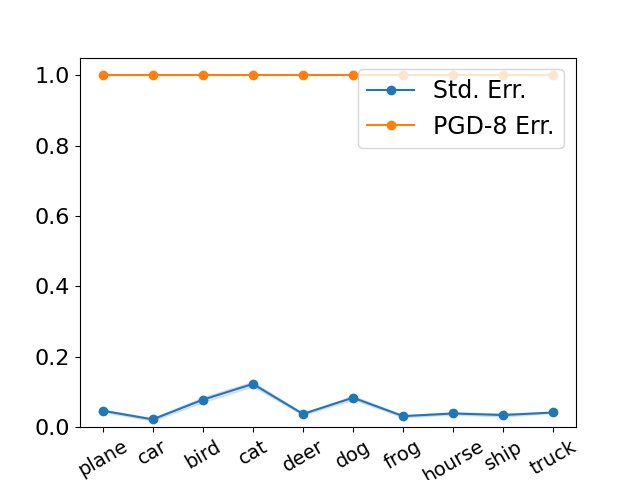}
\end{minipage}
}
\subfloat[PGD Adversarial Training]{
\begin{minipage}[c]{0.24\textwidth}
\centering
\includegraphics[width = 1\textwidth]{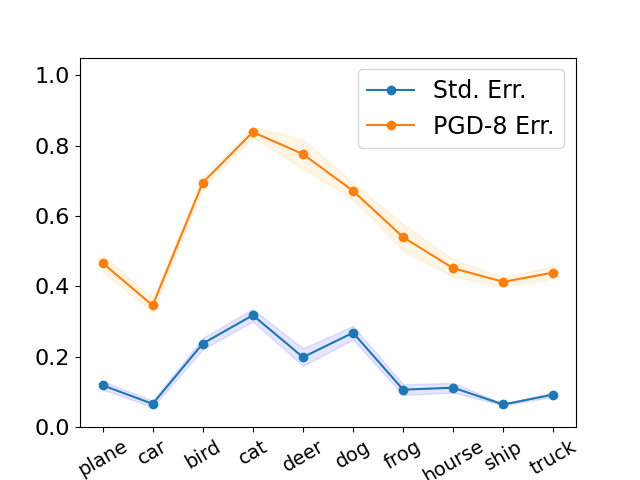}
\end{minipage}
}
\subfloat[TRADES ($1/\lambda = 1$)]{ 
\begin{minipage}[c]{0.24\textwidth}
\centering
\includegraphics[width = 1\textwidth]{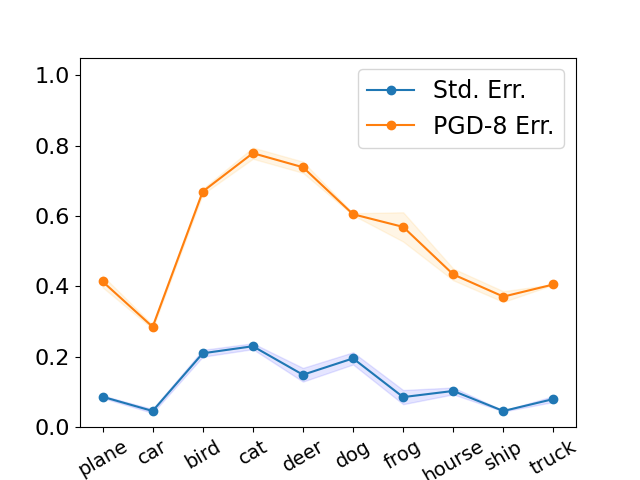}
\end{minipage}
}
\subfloat[TRADES ($1/\lambda = 6$)]{
\begin{minipage}[c]{0.24\textwidth}
\centering
\includegraphics[width = 1\textwidth]{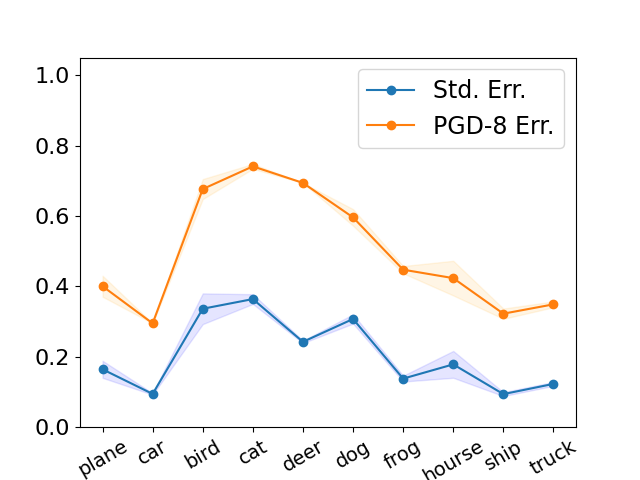}
\end{minipage}
}
\caption{The class-wise standard / robust error of natural and adversarial training methods on CIFAR10, under PreAct ResNet18. }
\label{fig:fair}
\end{figure*}

The existence of adversarial examples~\cite{goodfellow2014explaining, szegedy2013intriguing} causes great concerns when applying deep neural networks to safety-critical tasks, such as autonomous driving vehicles and face identification~\cite{morgulis2019fooling, sharif2016accessorize}. As countermeasures against adversarial examples, adversarial training algorithms aim to train a classifier that can classify the input samples correctly even when they are adversarially perturbed.  Namely, they optimize the model to have the minimum adversarial risk of a sample that can be perturbed to be wrongly classified: 
\begin{align}
    \min_f \underset{x}{\E}~  \left[\max_{||\delta||\leq\epsilon} \mathcal{L}(f(x+\delta),y)\right].`
    \label{eq:adv_training}
\end{align}
These adversarial training methods~\cite{kurakin2016adversarial, madry2017towards, zhang2019theoretically} have been shown to be one of the most effective and reliable approaches to improve the model robustness against adversarial attacks. 

Although promising to improve the model's robustness, we reveal an intriguing property about adversarial training algorithms: they usually result in a large disparity of accuracy and robustness among different classes. As a preliminary study in Section~\ref{pre}, we apply natural training and PGD adversarial training~\cite{madry2017towards} on the CIFAR10 dataset~\cite{krizhevsky2009learning} using a PreAct-ResNet18~\cite{he2016deep} architecture. For a naturally trained model, the model performance in each class is similar. However, in the adversarially trained model, there is a severe class-wise performance discrepancy (both accuracy and robustness). For example, the model has both low standard and robust errors on the samples from the class ``car'', but it has a much larger error rate on those ``cat'' images. Meanwhile, this fairness issue does not appear in natural models which are trained on clean data. Note that this phenomenon happens in a balanced CIFAR10 dataset and exists in other datasets, model structures and adversarial training algorithms. More details can be found in Section~\ref{pre}. If this phenomenon happens in real-world applications, it can raise huge concerns about safety. Imagine that a traffic sign recognizer has an overall high performance, but it is very inaccurate and vulnerable to perturbations only for some specific signs. The safety of this autonomous driving car is still not guaranteed. Meanwhile, this phenomenon can also lead to issues from the social ethics perspective. For example, a robustly trained face identification system might provide different levels of safety for the services provided to different ethnic communities. Thus, there is a pressing need to study this phenomenon.

In this work, we define this phenomenon as the ``\textit{robust fairness}'' problem of adversarial training and we aim to understand and mitigate this fairness problem. We first explore the question: ``\textit{why does adversarial training have this accuracy/robustness disparity between classes while natural training does not present a similar issue?}'' To answer this question, we first study on several cases on rather simple classification problems with mixture Gaussian distributions, where we design the data in different classes with different ``difficulty levels'' to be classified. By studying the class-wise performance of naturally and adversarially trained models, we deepen our understandings on this problem. Compared to natural training, adversarial training has a stronger tendency to favor the accuracy of the classes which are ``easier'' to be predicted.  Meanwhile, adversarial training will sacrifice the prediction performance for the ``difficult'' classes. As a result, there is a much obvious class-wise performance discrepancy in adversarial training. Motivated by our empirical findings and theoretical understandings, we propose a dynamic debiasing framework, i.e., Fair Robust Learning (FRL), to mitigate the robust fairness issue in adversarial settings. Our main contributions are summarized as: 
\begin{itemize}
  \setlength\itemsep{0em}
    \item We discover the robust fairness problem of adversarial training algorithms and empirically verify that this problem can be general;
    \item We build conceptual examples to understand the potential reasons that cause this fairness problem; and 
    \item We propose a Fair Robust Learning (FRL) framework to mitigate the fairness issue in adversarial training.
\end{itemize}

\section{Preliminary Studies}\label{pre}

In this section, we introduce our preliminary findings to show that adversarial training algorithms usually present the fairness issue, which is related to the strong disparity of standard accuracy and robustness among different classes. We first examine algorithms including PGD adversarial training~\citep{madry2017towards} and TRADES~\citep{zhang2019theoretically} on the CIFAR10 dataset~\citep{krizhevsky2009learning} under the $l_\infty
$-norm adversarial attack (under perturbation bound 8/255). In CIFAR10, we both naturally and adversarially train PreAct-ResNet18~\citep{he2009learning} models. The results are presented in Figure~\ref{fig:fair}. 


From Figure~\ref{fig:fair}, we can observe that -- for the naturally trained models, every class has similar standard error (around $7\pm5\%$) and $100\%$ robust error under the 8/255 PGD attack.  However, for adversarially trained models, the disparity phenomenon is severe. For example, a PGD-adversarially trained model has $32.8\%$ standard error and $82.4\%$ robust error for the samples in the class ``cat'', which are much larger than the model's average standard error $15.5\%$ and average robust error $56.4\%$. While, the best class ``car'' only has $6.1\%$ standard error and $34.3\%$ robust error. These results suggest that adversarial training can cause strong disparities of standard / robustness performance between different classes, which are originally negligible in natural training.

Moreover, from Figure~\ref{fig:fair}, we find that the reason of this fairness phenomenon might be due to the unequal influence of adversarial training on different classes. It tends to hurt the standard performance of classes which are intrinsically ``harder'' to be classified, but not effectively improve their robustness performance. In Table~\ref{tab:diff}, we list the classes ``dog'' and ``cat'', which have the highest errors in natural training, as well as ``car'' and ``ship'', which have the lowest errors. We can observe that adversarial training increases the standard errors of ``dog'' and ``cat'' by a much larger margin than the classes ``car'' and ``ship''. Similarly, adversarial training gives poorer help to reduce the robust errors of ``dog'' and ``cat''. As a conclusion, we hypothesize adversarial training tends to make the hard classes even harder to be classified or robustly classified. In Section~\ref{sec:toy}, we will theoretically confirm this hypothesis.

We further investigate other settings including model architecture WRN28, SVHN dataset, $l_2$-norm adversarial training and Randomized Smoothing~\cite{cohen2019certified} algorithm. Results can be found at Appendix~\ref{app:pre} where we can make similar observations. These findings suggest that observations from Figure~\ref{fig:fair} and Table~\ref{tab:diff} are likely to be generalized into other adversarial training algorithms, model architectures, datasets and adversarial attacks. 

\begin{table}[h]
\centering
\caption{The Changes of Standard \& Robust Error in Natural \& Adversarial Training in CIFAR10.}
\scalebox{0.9}{
\begin{tabular}{c| c c c c} 
\hline
Std. Error& \textbf{Cat} & \textbf{Dog}& \textbf{Car}& \textbf{Ship}\\
\hline
\textbf{Nat. Train} & 11.3 &10.0 &1.8 & 3.5 \\
\textbf{PGD Adv. Train} &34.8 & 26.9 & 6.1 &6.4\\
\hline
\textbf{Diff. (Adv. - Nat.)} &23.5 &16.9 &4.3 & 2.9\\
\hline
\end{tabular}}

\vspace{0.2cm}
\scalebox{0.9}{
\begin{tabular}{c| c c c c} 
\hline
Rob. Error & \textbf{Cat} & \textbf{Dog}& \textbf{Car}& \textbf{Ship}\\
\hline
\textbf{Nat. Train} & 100 &100 &100 &100 \\
\textbf{PGD Adv. Train} & 82.7 & 66.4 &34.3 &40.8\\
\hline
\textbf{Diff. (Adv. - Nat.)} &-17.3 &-33.5 &-65.7 &-59.2\\
\hline
\end{tabular}}

\label{tab:diff}
\end{table}

\section{Theoretical Analysis}\label{sec:toy}
From our preliminary studies, we consistently observe that adversarially trained models have great performance disparity (both standard and robust errors) between different classes. Moreover, adversarial training tends to hurt the classes which are originally harder to be classified in natural training. What is the reason that leads to this phenomenon? Is this ``unfairness'' property an inherent property of adversarial training methods? These questions are not trivial to answer because it is closely related to another property of adversarial training: it usually degrades the model's average accuracy. Given that naturally trained models already present slight disparities between classes (see Figure~\ref{fig:fair} (a)), is the larger class-wise accuracy disparity in adversarial training only a natural outcome of the model's worse average accuracy? 

To deepen our understandings on these questions, we theoretically study the effect of adversarial training on a binary classification task under a mixture Gaussian distribution. We design the two classes with different ``difficulties'' to be classified. In such case, adversarial training will not significantly degrade the average standard error, but its decision boundary, compared to naturally trained models, are more biased to favor the ``easier'' class and hurt the ``harder'' class.
From this theoretical study, we aim to validate that adversarial training methods can have the intrinsic property to give unequal influence between different classes and consequently cause the fairness problem. In the following, we first introduce the necessary notations and definitions.

\noindent \textbf{Notation.} We use $f$ to denote the classification model which is a mapping $f: \mathcal{X}\rightarrow \mathcal{Y}$ from input data space $\mathcal{X}$ and output labels $\mathcal{Y}$. Generally, for a classifier $f$, the overall \textit{standard error} is defined as $\mathcal{R}_\text{nat}(f) = \text{Pr.}(f(x)\neq y)$; and its overall \textit{robust error} is $\mathcal{R}_\text{rob}(f) = \text{Pr.}(\exists\delta, ||\delta||\leq\epsilon, \text{s.t.} f(x+\delta)\neq y)$, which is the probability that there exists a perturbation to make the model give a wrong prediction.\footnote{In this section, we focus on $l_\infty$-norm bounded perturbation.} We use $\mathcal{R}_\text{nat}(f; y)$ to denote the standard error conditional on a specific class $Y = y$.


\subsection{A Binary Classification Task}\label{sec:binary_example}


We start by giving a rather simple example of a binary classification task with a Gaussian mixture data. Here, we aim to design the two classes with inherent different ``difficulties'' to be classified. 
Specifically, in the following definition, the data are from 2 classes $\mathcal{Y} = \{-1, +1\}$ and the data from each class follow a Gaussian distribution $\mathcal{D}$ which is centered on $-\theta$ and $\theta$ respectively. In our case, we specify that there is a $K$-factor difference between two classes' variance: $\sigma_{+1}:\sigma_{-1} = K:1$ and $K>1$. 
\begin{spreadlines}{0.8em}
\begin{align}\small
\label{eq:data_dist1}
\begin{split}
    & y \stackrel{u.a.r}{\sim} \{-1, +1\},~~~ \theta = ( \overbrace{\eta,...,\eta}^\text{dim = d}),~~~
    \\ &x \sim  
    \begin{cases}
      ~~\mathcal{N}~(\theta, \sigma_{+1}^2I) ~~~~&\text{if $y= + 1$}\\
      ~~\mathcal{N}~(-\theta, \sigma_{-1}^2I) ~~~~&\text{if $y= - 1$}\\
    \end{cases} 
\end{split}
\end{align}
\end{spreadlines}

Intuitively, the class ``+1'' is harder than class ``-1'' because it is less compacted in the data space. In Theorem~\ref{thm:error_natural_train}, we formally show that the class ``+1'' can be indeed harder because an optimal linear classifier will give a larger error for the class ``+1'' than class ``-1''.


\begin{theorem} 
\label{thm:error_natural_train} For a data distribution $\mathcal{D}$ in Eq.~\ref{eq:data_dist1}, the optimal linear classifier $f_\text{nat}$ which minimizes the average standard classification error:
\begin{align*}
    f_\text{nat} = \argmin_f \text{Pr.}(f(x)\neq y)
\end{align*}
It has the intra-class standard error for the two classes:
\begin{align}
\label{eq:nat_error}
\small
\begin{split}
&\mathcal{R}_\text{nat}(f_\text{nat},-1) 
= \text{Pr.} \{\mathcal{N}(0,1)\leq  A  - K\cdot\sqrt{A^2 + q(K)}  \}
\\
& \mathcal{R}_\text{nat}(f_\text{nat},+1) 
=  \text{Pr.} \{\mathcal{N}(0,1)\leq  -K\cdot A + \sqrt{A^2 + q(K)}\}
\end{split}
\end{align}
where $A = \frac{2}{K^2-1}\frac{\sqrt{d}\eta}{\sigma}$ and $q(K) = \frac{2\log K}{K^2-1}$ which is a positive constant and only depends on $K$,  As a result, the class ``+1'' has a larger standard error:
\begin{equation*}
    \mathcal{R}_\text{nat}(f_\text{nat},-1) <\mathcal{R}_\text{nat}(f_\text{nat},+1).
\end{equation*}
\end{theorem}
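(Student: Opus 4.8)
The plan is to collapse the $d$-dimensional problem to a one-dimensional threshold problem, solve for the optimal threshold in closed form, and then read off the two conditional errors and compare them.

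\textbf{Reduction to one dimension.} First I would argue that an optimal linear rule $\operatorname{sign}(w^\top x + b)$ may be taken with normal vector $w \parallel \theta$. Conditioned on $y$, the scalar statistic $w^\top x$ is Gaussian with mean $\pm\, w^\top\theta$ and variance $\sigma_y^2\|w\|^2$, so each conditional error depends on $w$ only through the ratio $|w^\top\theta|/\|w\|$; this ratio is maximized exactly when $w\parallel\theta$, since any component of $w$ orthogonal to $\theta$ inflates the variance without changing the separation of the means. It therefore suffices to project onto $u=\theta/\|\theta\|$ and study $z=u^\top x$, which satisfies $z\mid(y=+1)\sim\mathcal{N}(m,K^2\sigma^2)$ and $z\mid(y=-1)\sim\mathcal{N}(-m,\sigma^2)$, where $m=\sqrt{d}\,\eta=\|\theta\|$, $\sigma=\sigma_{-1}$, and $\sigma_{+1}=K\sigma$. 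The classifier is then a single threshold $t$, predicting $+1$ when $z>t$ (the opposite orientation has error at least $1/2$ and can be discarded).

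\textbf{Optimizing the threshold.} The average error is $g(t)=\tfrac12\Phi\!\left(\frac{t-m}{K\sigma}\right)+\tfrac12\Phi\!\left(-\frac{t+m}{\sigma}\right)$, where $\Phi$ is the standard normal CDF. Setting $g'(t)=0$ gives the stationarity condition $\frac1K\,\phi\!\left(\frac{t-m}{K\sigma}\right)=\phi\!\left(\frac{t+m}{\sigma}\right)$ with $\phi$ the standard normal density. Taking logarithms converts this into a quadratic equation in $t$; solving it and substituting the definitions $A=\frac{2}{K^2-1}\frac{\sqrt d\,\eta}{\sigma}$ and $q(K)=\frac{2\log K}{K^2-1}$ (so that $m/\sigma=(K^2-1)A/2$) yields the two roots $t/\sigma=-\frac{(K^2+1)A}{2}\pm K\sqrt{A^2+q(K)}$.

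\textbf{Selecting the genuine minimizer.} The step I expect to require the most care is showing which root is the global minimizer. I would apply the second-derivative test: using $\phi'(s)=-s\,\phi(s)$ together with the stationarity identity, at any critical point $g''(t)$ equals a strictly positive multiple of $t(K-1)+m(K+1)$, so a critical point is a local minimum precisely when $t/\sigma>-\frac{(K+1)^2A}{2}$. Comparing each root against this threshold shows the $+$ root satisfies it (the inequality reduces to $\sqrt{A^2+q(K)}>-A$, always true) while the $-$ root violates it; hence the $+$ root is the unique local minimum and the $-$ root a local maximum. Since $g(t)\to\tfrac12$ as $t\to\pm\infty$, the $+$ root is the global minimizer, which fixes $t/\sigma=-\frac{(K^2+1)A}{2}+K\sqrt{A^2+q(K)}$.

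\textbf{Conclusion.} Substituting this $t$ into $-\frac{t+m}{\sigma}$ and $\frac{t-m}{K\sigma}$ gives exactly the arguments $A-K\sqrt{A^2+q(K)}$ and $-KA+\sqrt{A^2+q(K)}$, recovering the stated expressions for $\mathcal{R}_\text{nat}(f_\text{nat},-1)$ and $\mathcal{R}_\text{nat}(f_\text{nat},+1)$. Finally, since $\Phi$ is strictly increasing, the inequality $\mathcal{R}_\text{nat}(f_\text{nat},-1)<\mathcal{R}_\text{nat}(f_\text{nat},+1)$ is equivalent to $A-K\sqrt{A^2+q(K)}<-KA+\sqrt{A^2+q(K)}$; rearranging and cancelling the common positive factor $(1+K)$ leaves $A<\sqrt{A^2+q(K)}$, which holds because $q(K)>0$ for $K>1$. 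This completes the argument.
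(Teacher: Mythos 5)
Your proof is correct and follows essentially the same route as the paper's: fix the optimal direction (the paper's Lemma~1 shows $w_\text{nat}=\bm{1}$, which is exactly your $w\parallel\theta$ since $\theta=\eta\bm{1}$), reduce to a one-dimensional threshold, set the derivative of the average error to zero, and solve the resulting quadratic; your closed form $t/\sigma=-\tfrac{(K^2+1)A}{2}+K\sqrt{A^2+q(K)}$ matches the paper's $b_\text{nat}$ in Eq.~\ref{eq:b_nat} under $t=-b_\text{nat}/\sqrt{d}$, and the final substitution and the $q(K)>0$ comparison are identical. Two remarks. First, you are actually more careful than the paper in selecting the correct root of the quadratic (the paper simply asserts the minimizing $b_\text{nat}$), but your second-derivative computation has an arithmetic slip: using $\phi'(s)=-s\phi(s)$ and the stationarity identity, $g''(t)$ is a positive multiple of $(K^2-1)t+(K^2+1)m$, not $(K-1)t+(K+1)m$, so the local-minimum condition is $t/\sigma>-\tfrac{(K^2+1)A}{2}$ rather than $t/\sigma>-\tfrac{(K+1)^2A}{2}$; fortunately both thresholds classify the two roots identically (the $+$ root trivially exceeds $-\tfrac{(K^2+1)A}{2}$ since $K\sqrt{A^2+q(K)}>0$), so your conclusion is unaffected. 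Second, the parenthetical claim that the opposite orientation ``has error at least $1/2$'' is not literally true (its error is $1-g(t)$, whose infimum $1-g(t_-)$ lies slightly below $1/2$); one should instead check $g(t_+)\le 1-g(t_-)$, a point the paper also glosses over entirely.
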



A detailed proof of Theorem~\ref{thm:error_natural_train} can be found in Appendix~\ref{app:thy}.  From Theorem~\ref{thm:error_natural_train}, it demonstrates that class ``+1'' (with large variance) can be harder to be classified than class ``-1'', because an optimal natural classifier will present a larger standard error in class ``+1''. Note that the classwise difference is due to the positive term $q(K)$ in Eq.~\ref{eq:nat_error}, which depends on the variance ratio $K$. If the two classes' variances are equal, i.e., $K = 1$ , the standard errors for the two classes are the same. Next, we will show that, in the setting of Eq.~\ref{eq:data_dist1}, an optimal robust classifier (adversarial training) will give a model which further benefits the easier class ``-1'' and hurt the harder class ``+1''.

\subsection{Optimal Linear Model to Minimize Robust Error}

In this subsection, we demonstrate that an adversararially trained model exacerbates the performance gap between these two classes, by giving a decision boundary which is closer to samples in the harder class ``+1''  and farther to the class ``-1''. Similar to Theorem~\ref{thm:error_natural_train}, we  calculate the classwise standard errors for robust classifiers.
\begin{theorem} 
\label{thm:error_adv_train} For a data distribution $\mathcal{D}$ in Eq.~\ref{eq:data_dist1}, the optimal robust linear classifier $f_\text{rob}$ which minimizes the average robust error:
\begin{align*}
    f_\text{rob} =  \argmin_f\text{Pr.}(\exists ||\delta||\leq \epsilon~~ \text{s.t.}~~ f(x+\delta)\neq y)
\end{align*}
It has the intra-class standard error for the two classes:
\begin{align}
\label{eq:nat_error}
\small
\begin{split}
&\mathcal{R}_\text{nat}(f_\text{rob},-1)\\ 
= &\text{Pr.} \{\mathcal{N}(0,1)\leq  B  - K\cdot\sqrt{B^2 + q(K)} -\frac{\sqrt{d}}{\sigma}\epsilon  \}
\\
& \mathcal{R}_\text{nat}(f_\text{rob},+1)\\ 
=  &\text{Pr.} \{\mathcal{N}(0,1)\leq  -K\cdot B + \sqrt{B^2 + q(K)}-\frac{\sqrt{d}}{K\sigma}\epsilon\}
\end{split}
\end{align}
where $B = \frac{2}{K^2-1}\frac{\sqrt{d}(\eta-\epsilon)}{\sigma}$ and  $q(K) = \frac{2\log K}{K^2-1}$ is a positive constant and only depends on $K$, 
\end{theorem}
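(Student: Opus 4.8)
The plan is to mirror the structure of the proof of Theorem~\ref{thm:error_natural_train}, but replace the natural-error objective with the robust-error objective. First I would parametrize the candidate classifiers. Since the data is a symmetric two-component Gaussian mixture with means $\pm\theta$ and $\theta=(\eta,\dots,\eta)$, I expect the optimal robust linear classifier to have the form $f(x)=\sign(w^\top x - b)$ with $w$ aligned to $\theta$ (i.e.\ $w=\vone$ up to scaling), so that the decision boundary is a hyperplane orthogonal to $\theta$. I would justify this reduction by a symmetry/rotation argument: the distribution is invariant under rotations fixing the $\theta$-direction, so projecting onto that one-dimensional direction loses nothing, and the problem collapses to choosing a single scalar threshold $t$ along the $\theta$-axis.

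Next I would write down the per-class robust error as a function of the threshold. The key observation is that for an $l_\infty$-bounded perturbation of size $\epsilon$, an adversary shifting a point toward the boundary effectively moves the projected coordinate by $\tfrac{\sqrt d}{\text{(scale)}}\epsilon$ in the worst case; concretely, a sample from class $+1$ is robustly misclassified iff its projection lands within $\epsilon$ (in $l_\infty$, hence $\sqrt d\,\eta$-type shift along the diagonal direction) of the wrong side of the boundary. This converts the robust errors into Gaussian tail probabilities with the threshold shifted inward by $\tfrac{\sqrt d}{\sigma}\epsilon$ for the class-$+1$ variance scale $K\sigma$ and by the corresponding amount for class $-1$. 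I would then form the total robust error $\tfrac12\mathcal{R}_\text{rob}(f;-1)+\tfrac12\mathcal{R}_\text{rob}(f;+1)$ as a function of the threshold, differentiate, and set the derivative to zero.

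The first-order optimality condition will equate the two Gaussian densities (weighted by their variances $\sigma$ and $K\sigma$) evaluated at the shifted threshold, which yields a quadratic equation in the threshold exactly analogous to the one in Theorem~\ref{thm:error_natural_train}, but with $\eta$ replaced by $\eta-\epsilon$ in the relevant linear coefficient. Solving that quadratic gives the optimal boundary location encoded by $B=\frac{2}{K^2-1}\frac{\sqrt d(\eta-\epsilon)}{\sigma}$, again with the same $q(K)=\frac{2\log K}{K^2-1}$ arising from the $\log K$ term in the density ratio. Substituting the optimal threshold back into each class's robust-error tail expression, and then re-expressing the result as the corresponding \emph{standard} error $\mathcal{R}_\text{nat}(f_\text{rob};\pm1)$ (evaluating the clean misclassification probability at the robustly-optimal boundary), produces the two claimed formulas in Eq.~\ref{eq:nat_error}, where the extra additive terms $-\tfrac{\sqrt d}{\sigma}\epsilon$ and $-\tfrac{\sqrt d}{K\sigma}\epsilon$ come from the differing variance scales of the two classes.

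The main obstacle I anticipate is handling the $l_\infty$ adversary correctly when reducing to the one-dimensional projected problem: the worst-case perturbation need not be aligned with $\theta$, so I must verify that the optimal adversarial shift, projected onto the $\theta$-direction, contributes exactly the stated $\sqrt d\,\epsilon$-scaled amount and that this is simultaneously worst-case for the robust objective and consistent with the hyperplane form of $f_\text{rob}$. Concretely, for a boundary $\vone^\top x = c$, the $l_\infty$ ball of radius $\epsilon$ can move $\vone^\top x$ by at most $\epsilon\|\vone\|_1 = d\epsilon$, and I would need to track how this interacts with the normalization of $w$ and the per-class variance to recover the $\tfrac{\sqrt d}{\sigma}$ and $\tfrac{\sqrt d}{K\sigma}$ factors rather than a naive $d\epsilon$. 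Once this reduction is pinned down, the remaining steps are the same quadratic-optimization and tail-probability bookkeeping as in the natural-training case, which I would not grind through in detail.
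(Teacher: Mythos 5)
Your proposal follows essentially the same route as the paper's proof: reduce to $w=\vone$, write the robust error of each class as a Gaussian tail with the mean shifted by the worst-case $l_\infty$ perturbation (which moves $\vone^\top x$ by $\epsilon\|\vone\|_1=d\epsilon$, i.e.\ $\tfrac{\sqrt d}{\sigma}\epsilon$ after normalizing by the projection's standard deviation $\sqrt d\,\sigma$), optimize the intercept by setting the derivative to zero — which is exactly the natural-training quadratic with $\eta$ replaced by $\eta-\epsilon$ — and then evaluate the clean error at the robust intercept to pick up the extra $-\tfrac{\sqrt d}{\sigma}\epsilon$ and $-\tfrac{\sqrt d}{K\sigma}\epsilon$ terms. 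The one step whose justification you should replace is the rotation-invariance argument for $w=\vone$: the $l_\infty$ ball is not rotation-invariant, so that symmetry reduction is only valid for the natural objective; the paper instead proves $w_\text{rob}=\vone$ by the same coordinate-swap contradiction as in Lemma 1 (equivalently, note that for $w\ge 0$ one has $w^\top\theta-\epsilon\|w\|_1=(\eta-\epsilon)\|w\|_1$, so the robust problem is the natural one with signal $\eta-\epsilon$ and Cauchy--Schwarz forces equal coordinates). Your flagged ``obstacle'' paragraph already contains the correct bookkeeping to finish this.
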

Note that we limit the perturbation margin $\epsilon$ in the region $0<\epsilon<\eta$ to guarantee that the robust optimization gives a reasonable classification in this setting in Eq.~\ref{eq:data_dist1}. The detailed proof is similar to Theorem~\ref{thm:error_natural_train} and is given in Appendix~\ref{app:thy}. From the results in Theorems~\ref{thm:error_natural_train} and \ref{thm:error_adv_train}, an corollary can demonstrate that the robust classifier will further hurt the ``harder'' class's performance.

\begin{corollary}
\label{thm:enlarge_disparity}
Adversarially Trained Models on $\mathcal{D}$ will increase the standard error for class ``+1'' and reduce the standard error for class ``-1'':
\begin{align*}
\begin{split}
\mathcal{R}_\text{nat}(f_\text{rob},-1) <\mathcal{R}_\text{nat}(f_\text{nat},-1).  \\
\mathcal{R}_\text{nat}(f_\text{rob},+1) >\mathcal{R}_\text{nat}(f_\text{nat},+1).  
\end{split}
\end{align*}
\end{corollary}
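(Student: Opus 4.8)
The plan is to use the strict monotonicity of the standard normal CDF, $\Phi(z)=\text{Pr.}\{\mathcal{N}(0,1)\le z\}$, so that each of the two inequalities in the Corollary becomes a comparison between the arguments fed into $\Phi$ in Theorems~\ref{thm:error_natural_train} and \ref{thm:error_adv_train}. The first thing I would record is the relationship between the two constants: since $A=\frac{2}{K^2-1}\frac{\sqrt d\,\eta}{\sigma}$ and $B=\frac{2}{K^2-1}\frac{\sqrt d(\eta-\epsilon)}{\sigma}$, we have $A>B>0$ (using $0<\epsilon<\eta$ and $K>1$) with $A-B=\frac{2}{K^2-1}\frac{\sqrt d\,\epsilon}{\sigma}$. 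Equivalently, the two perturbation offsets appearing in Theorem~\ref{thm:error_adv_train} rewrite as $\frac{\sqrt d}{\sigma}\epsilon=\frac{K^2-1}{2}(A-B)$ and $\frac{\sqrt d}{K\sigma}\epsilon=\frac{K^2-1}{2K}(A-B)$; this is the substitution that makes everything collapse.

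For class ``$-1$'' I would show that the robust argument is strictly smaller than the natural one, i.e.
\begin{align*}
B-K\sqrt{B^2+q(K)}-\tfrac{\sqrt d}{\sigma}\epsilon \;<\; A-K\sqrt{A^2+q(K)} .
\end{align*}
Moving terms, this is $K\big(\sqrt{A^2+q(K)}-\sqrt{B^2+q(K)}\big) < (A-B)+\frac{\sqrt d}{\sigma}\epsilon$. Applying the conjugate identity $\sqrt{A^2+q(K)}-\sqrt{B^2+q(K)}=\frac{(A-B)(A+B)}{\sqrt{A^2+q(K)}+\sqrt{B^2+q(K)}}$ together with the offset substitution, and dividing through by $A-B>0$, the claim reduces to $\frac{2K(A+B)}{\sqrt{A^2+q(K)}+\sqrt{B^2+q(K)}}<K^2+1$. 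Because $q(K)>0$ forces $\sqrt{A^2+q(K)}>A$ and $\sqrt{B^2+q(K)}>B$, the ratio $\frac{A+B}{\sqrt{A^2+q(K)}+\sqrt{B^2+q(K)}}$ is below $1$, so the left side is below $2K$, and $2K<K^2+1$ since $(K-1)^2>0$.

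For class ``$+1$'' the target inequality points the other way — the robust argument should be strictly larger — namely $-KB+\sqrt{B^2+q(K)}-\frac{\sqrt d}{K\sigma}\epsilon > -KA+\sqrt{A^2+q(K)}$. Rearranging gives $K(A-B)-\frac{\sqrt d}{K\sigma}\epsilon > \sqrt{A^2+q(K)}-\sqrt{B^2+q(K)}$; the same conjugate identity and the substitution $\frac{\sqrt d}{K\sigma}\epsilon=\frac{K^2-1}{2K}(A-B)$ collapse the left side to $(A-B)\frac{K^2+1}{2K}$, so after dividing by $A-B$ the claim becomes $\frac{K^2+1}{2K}>\frac{A+B}{\sqrt{A^2+q(K)}+\sqrt{B^2+q(K)}}$. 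The right side is again below $1$, while the left side exceeds $1$ exactly because $(K-1)^2>0$, so the inequality holds and the error for class ``$+1$'' strictly increases.

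I expect the only real obstacle to be bookkeeping rather than anything deep: getting the two different perturbation coefficients ($\frac{\sqrt d}{\sigma}\epsilon$ for class ``$-1$'' versus $\frac{\sqrt d}{K\sigma}\epsilon$ for class ``$+1$'') rewritten in terms of $A-B$ so that both inequalities funnel into the single elementary fact $(K-1)^2>0$. It is worth double-checking the sign conventions — that a smaller argument to $\Phi$ means a smaller error for class ``$-1$'' while a larger argument means a larger error for class ``$+1$'' — and verifying $A,B>0$, so that dividing by $A-B$ and invoking the bound $\frac{A+B}{\sqrt{A^2+q(K)}+\sqrt{B^2+q(K)}}<1$ are both legitimate.
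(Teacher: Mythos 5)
Your proof is correct, and it takes a somewhat different route from the paper's. The paper argues at the level of the classifiers themselves: it extracts the intercepts $b_\text{nat}=g(\eta)$ and $b_\text{rob}=g(\eta-\epsilon)$ from the proofs of Theorems~\ref{thm:error_natural_train} and~\ref{thm:error_adv_train}, shows $g$ is strictly increasing on $(0,\eta)$ by bounding $g'(\eta)\geq \frac{K-1}{K+1}d>0$, and concludes that $b_\text{rob}<b_\text{nat}$ shifts the decision boundary so that more mass is labeled ``$-1$'', which immediately gives both inequalities at once. You instead work only with the final error formulas stated in the two theorems, comparing the arguments of the Gaussian CDF directly and reducing each inequality, via the conjugate identity and the substitutions $\frac{\sqrt d}{\sigma}\epsilon=\frac{K^2-1}{2}(A-B)$ and $\frac{\sqrt d}{K\sigma}\epsilon=\frac{K^2-1}{2K}(A-B)$, to $\frac{A+B}{\sqrt{A^2+q(K)}+\sqrt{B^2+q(K)}}<1<\frac{K^2+1}{2K}$. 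Both arguments ultimately rest on $(K-1)^2>0$ together with $q(K)>0$, so they are close in spirit; the paper's version is shorter and more geometric (one monotonicity check covers both classes) but depends on intermediate quantities from the theorem proofs, whereas yours is purely algebraic, self-contained given only the theorem statements, and handles the two classes by two parallel computations. Your side conditions ($A>B>0$ from $0<\epsilon<\eta$, $q(K)>0$ from $K>1$, and the monotonicity of the normal CDF) are all correctly identified and justified.
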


\begin{proof}
\noindent A simplified proof sketch for Corollary~\ref{thm:enlarge_disparity} can help shed light on the reason of this behavior for adversarial training. First, from the definition of the distribution $\mathcal{D}$, it is easy to have the intermediate result that natural / robust classifiers have the weight vectors $w_\text{nat} = w_\text{rob}=\bf1$. The only difference is their interception terms $b_\text{nat}$ and $b_\text{rob}$. In Appendix~\ref{app:thy}, we show that:
\begin{align}
\label{eq:intercept}
\small
\begin{split}
b_\text{nat} = \frac{K^2+1}{K^2-1}d\eta - K\sqrt{\frac{4}{(K^2-1)^2}\cdot d^2\eta^2+d\sigma^2q(K)}:=g(\eta).
\end{split}
\end{align}
In particular, for robust classifiers, if we look at the objective of robust classification:
\begin{align*}\small
\begin{split}
\mathcal{R}_\text{rob}(f)=&\text{Pr.}(\exists ||\delta||\leq \epsilon~~ \text{s.t.}~~ f(x+\delta)\neq y)\\
=& \max_{||\delta||\leq\epsilon}{\text{Pr.}(f(x+\delta)\neq y)}\\
=& \frac{1}{2}\text{Pr.}(f(x+\epsilon)\neq -1|y = -1)
\\ +&\frac{1}{2} \text{Pr.}(f(x -\epsilon)\neq+1|y = +1)
\end{split}
\end{align*}
\noindent the robust classifier $f_\text{rob}$ directly minimizes the standard error of samples $(x-\bf\epsilon)$ for $x$ in class ``+1'', and it minimizes the error of samples $(x+\bf\epsilon)$ for $x$ in class ``-1''.
As a consequence, the robust model $f_\text{rob}$ minimizes the errors of samples whose centers are $\pm\theta' = \pm (\eta-\epsilon, ...,\eta-\epsilon)$, which are both $\epsilon$-distance closer to zero $\bf 0$ compared to $\pm \theta$. Thus, we can get the interception term of the robust model, by only replacing $\eta$ in $b_\text{nat}$ by ($\eta - \epsilon$): 
\begin{equation*}
b_\text{rob} = g(\eta - \epsilon).
\end{equation*}
In Appendix~\ref{app:thy}, we show $g$ is a monotone increasing function from 0 to $\eta$; thus we have the relation $0<b_\text{rob}<b_\text{nat}$. This suggests that a robust classifier will predict more samples in $\mathcal{D}$ to be a negative ``-1'' class, and hence reduce the classification error for class ``-1'' but increase the error of class ``+1''.
\end{proof}

\begin{figure}[t]
\subfloat[Logistic Regression]{\label{clean}
\begin{minipage}[c]{0.24\textwidth}
\centering
\includegraphics[width = 1\textwidth]{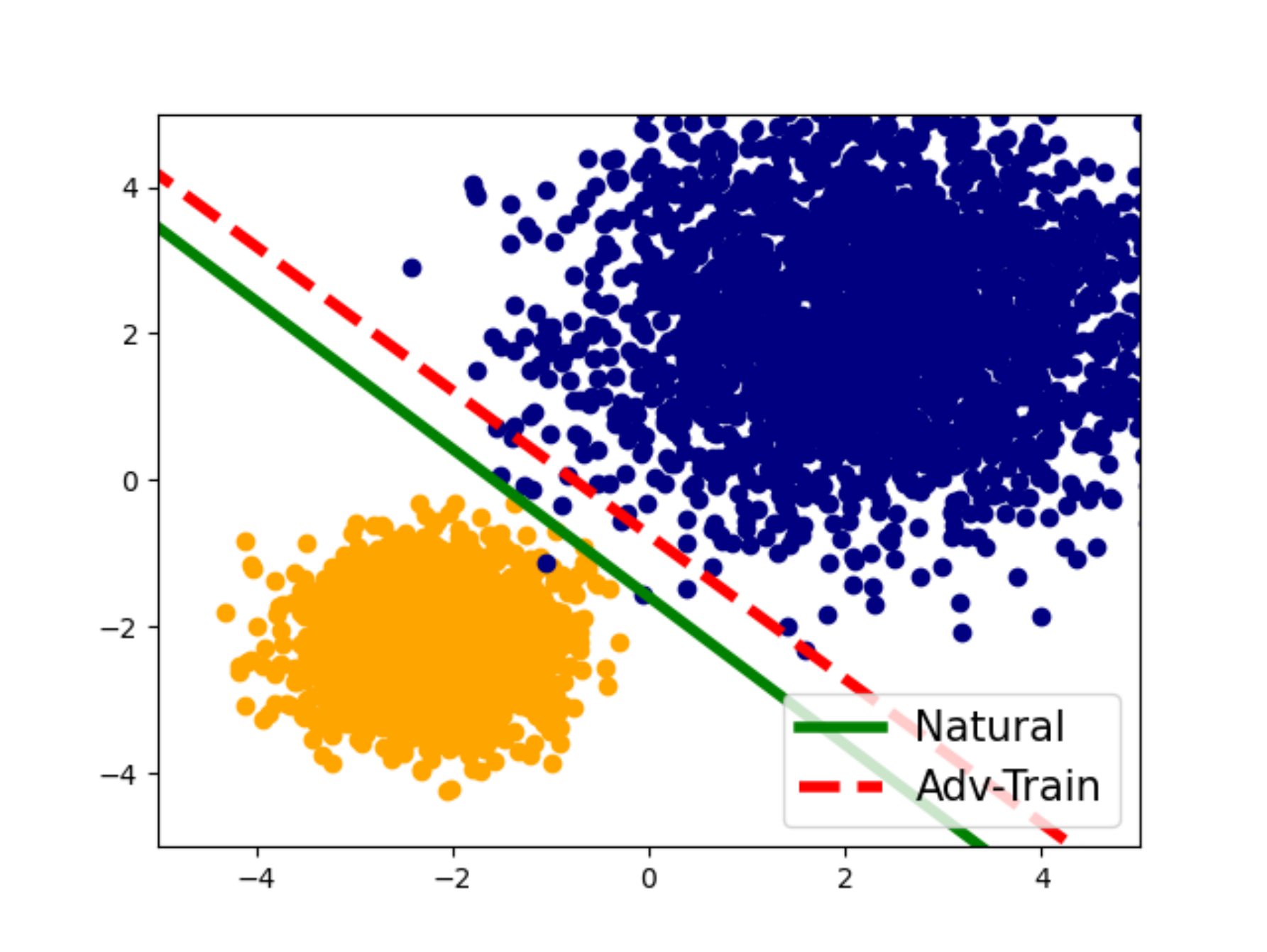}
\end{minipage}
}
\subfloat[One Layer Perceptron]{
\begin{minipage}[c]{0.24\textwidth}
\centering
\includegraphics[width = 1\textwidth]{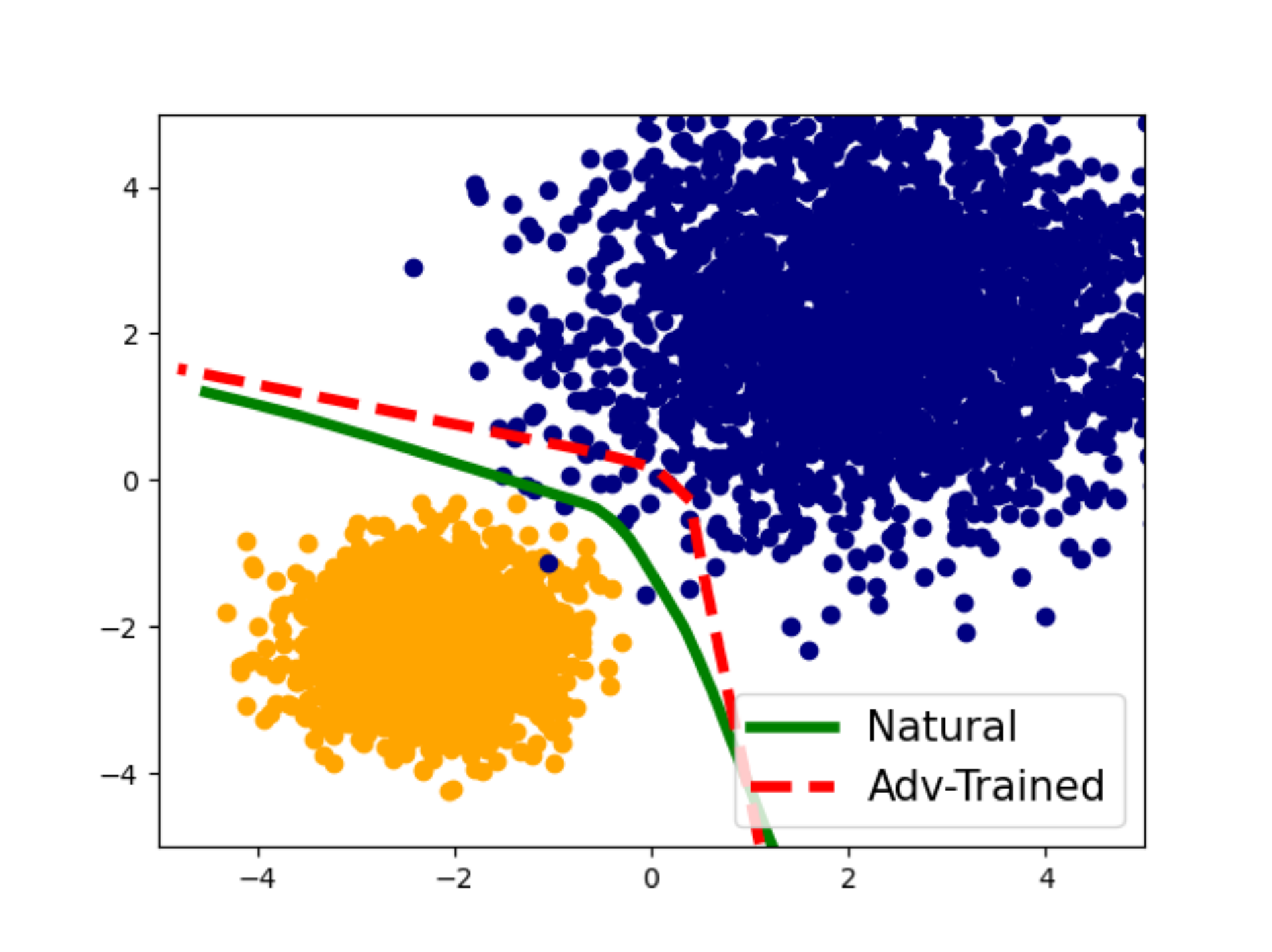}
\end{minipage}
}
\caption{Logistic Regression and Multi-perceptron classifiers (natural and robust) on simulated binary data in Eq.~\ref{eq:data_dist1}.}
\label{fig:2dim}
\end{figure}

In Figure~\ref{fig:2dim},we give an illustration for the theories in a 2-dim space to show the effect of adversarial training. 
Particularly, we sampled Gaussian data from class ``+1'' (blue) which centered on $\theta = (2,2)$, and has variance $\sigma_{+1}^2 = 2$, as well as class ``-1'' (yellow) which centered on $-\theta = (-2,-2)$, and has variance $\sigma_{-1}^2 = 1$.
We apply a logistic regression classifier for natural training (green line) and adversarial training (with perturbation bound $\epsilon = 0.5$, red line). From Figure~\ref{fig:2dim}, we get the consistent results with our theories: (1) there are more samples in the class ``+1'' than class ``-1'' which are wrongly predicted by a natural classifier; (2) adversarial training will further exacerbate the disparity by moving decision boundary closer to the harder class ``+1''. 
In Figure~\ref{fig:2dim} (right), we also show the results for a non-linear MLP classifier, where we can make similar observations. These theoretical understandings about adversarial training suggest that adversarial training will naturally bring unequal effect to different classes in the data distribution and consequently cause severe fairness issues.

\section{Fair Robust Learning (FRL)}~\label{sec:method}
The observations from both preliminary experiments and theoretical understandings suggest that the fairness issues in adversarial training can be a general and serious concern. In this section, we first discuss fairness requirements that an optimal robust model should satisfy, and then introduce algorithms to achieve these objectives.

\subsection{Objective of Robust Fairness}
In this work, we desire an optimal robust model that can achieve the parity of both standard prediction accuracy and adversarial robustness among each class $y\in Y$:
\begin{itemize}
  \setlength\itemsep{0em}
    \item \textbf{\textit{Equalized Accuracy}}: one classifier $f$'s standard error is statistically independent of the ground truth label $y$: $\text{Pr.}(f(x) \neq y | Y = y) \approx \text{Pr.}(f(x) \neq y)$ for all $y\in Y$.
    \item \textbf{\textit{Equalized Robustness}}: one classifier $f$'s robust error is statistically independent of the ground truth label $y$: $\text{Pr.}(\exists~ ||\delta||\leq\epsilon, f(x+\delta) \neq y | Y=y) \approx \text{Pr.}(\exists ~||\delta||\leq\epsilon, f(x+\delta) \neq y)$ for all $y\in Y$.
\end{itemize}
The notion of ``Equalized Accuracy'' is well studied in traditional fairness research~\citep{buolamwini2018gender,zafar2017fairness} which desires the model to provide equal prediction quality for different groups of people. The ``Equalized Robustness'' is our new desired ``fairness'' property for robustly trained models. For every class, the model should provide equal robustness and safety to resist adversarial attacks. Therefore, the robust model will have high overall safety but no obvious ``weakness''.

Faced with the fairness objectives mentioned above, in our work, we propose a Fair Robust Learning~(FRL) strategy to train robust models that have equalized accuracy and robustness performance for each class. Formally, we aim to train a classifier $f$ to have minimal overall robust error ($\mathcal{R}_\text{rob}(f)$), as well stressing $f$ to satisfy a series of fairness constraints as:
\begin{align}
\label{equ:constrain1}
\small
\begin{split}
&\underset{f}{\text{minimize}} ~~~~  \mathcal{R}_{\text{rob}}(f) \\
\text{s.t.} ~~~~ &
\begin{cases}
\mathcal{R}_{\text{nat}}(f,i) - \mathcal{R}_{\text{nat}}(f) \leq \tau_1& \\
\mathcal{R}_{\text{rob}}(f,i) - \mathcal{R}_{\text{rob}}(f) \leq \tau_2&
\text{ for each }i\in Y
\end{cases}
\end{split}
\end{align}
where $\tau_1$ and $\tau_2$ are small and positive predefined paramters. The constraints in Eq.~\ref{equ:constrain1} restrict the model's error for each class $i\in Y$ (both standard error $\mathcal{R}_\text{nat}(f,i)$ and robust error $\mathcal{R}_\text{rob}(f,i)$) should not exceed the average level ($\mathcal{R}_\text{nat}(f)$ and $\mathcal{R}_\text{rob}(f)$) by a large margin. Thus, there will be no obvious worst group in the whole dataset. One thing to note is that in Eq~\ref{equ:constrain1}, the robust error is always strongly related to the standard error~\cite{zhang2019theoretically, tsipras2018robustness} (see Eq.~\ref{equ:tradeoff}). Thus, during the debiasing process, we could have a twisted influence on the class $i$'s standard and robust errors. For example, if we apply some importance weighting methods to upweight the cost of $\mathcal{R}_{\text{rob}}(f,i)$, we also implicitly upweight the cost for $\mathcal{R}_{\text{nat}}(f,i)$. Therefore, we propose to separate the robust error into the sum of \textit{standard error} and \textit{boundary error} inspired by~\cite{zhang2019theoretically} as:
\begin{align}
\label{equ:tradeoff}
\small
\begin{split}
    &\mathcal{R}_{\text{rob}}(f,i)\\ 
    =& \text{Pr.}\{\exists \delta, \text{  s.t.  } f(x+\delta)\neq y| y = i\} \\
    =& \text{Pr.}\{f(x)\neq y| y = i\} + \text{Pr.}\{\exists \delta, f(x+\delta) \cdot f(x) \leq 0| y = i\}\\
    =& \mathcal{R}_{\text{nat}}(f,i) + \mathcal{R}_{\text{bndy}}(f,i)
\end{split}
\end{align}
where $\mathcal{R}_\text{bndy}(f,i) =  \text{Pr.}\{\exists\delta, f(x+\delta) \cdot f(x) \leq 0| y = i\}$ represents the probability that a sample from class $i$ lies close to the decision boundary and can be attacked. By separating the standard error and boundary error during adversarial training, we are able to independently solve the unfairness of both standard error and boundary error. Formally, we have the training objective as:
\begin{align}
\label{equ:constrain2}
\small
\begin{split}
\underset{f}{\text{minimize}}~~~~ & \mathcal{R}_{\text{nat}}(f) + \mathcal{R}_{\text{bndy}}(f)\\
\text{s.t.} ~~~~ 
&\begin{cases}
\mathcal{R}_{\text{nat}}(f,i) - \mathcal{R}_{\text{nat}}(f) \leq \tau_1\\
\mathcal{R}_{\text{bndy}}(f,i) - \mathcal{R}_{\text{bndy}}(f) \leq \tau_2
\text{ for each }i\in Y
\end{cases}
\end{split}
\end{align}
During training to optimize the boundary errors, we borrow the idea from \cite{zhang2019theoretically}, which minimizes the KL-divergence between the output logits of clean samples and their adversarial samples. In the following subsections, we explore effective methods to solve the problem in Eq.~\ref{equ:constrain2}.

\subsection{Reweight for Robust Fairness}
In order to solve the fair robust training problem in Eq.~\ref{equ:constrain2}, we first follow the main pipeline from traditional machine learning debiasing works such as~\citep{agarwal2018reductions, zafar2017fairness}, which reduce the problem in Eq.~\ref{equ:constrain1} into a series of \textit{Cost-sensitive} classification problems and continuously penalize the terms which violate the fairness constraints. We begin by introducing Lagrange multipliers $\phi = (\phi^i_\text{nat}, \phi^i_\text{bndy})$ (non-negative) for each constraint in Eq.~\ref{equ:constrain1} and form the Lagrangian:
\begin{align}
\label{equ:training}
\small
\begin{split}
L(f,\phi) = &\mathcal{R}_{\text{nat}}(f) +  \mathcal{R}_{\text{bndy}}(f)\\ &+\sum_{i=1}^{Y}{\phi^i_\text{nat}(\mathcal{R}_{\text{nat}}(f,i) - \mathcal{R}_{\text{nat}}(f) -  \tau_1)^+}\\
&+\sum_{i=1}^{Y}{\phi^i_\text{bndy}(\mathcal{R}_{\text{bndy}}(f,i)- \mathcal{R}_{\text{bndy}}(f) -  \tau_2)^+}
\end{split}
\end{align}
Thus, the problem in Eq.~\ref{equ:constrain2} equals to solving the max-min game between two rivals $f$ and $\phi$ as:
\begin{align}
\label{equ:minmax}
    \max_{\phi_{\text{nat}}, \phi_{\text{rob}}\geq 0} \min_{f} ~~L(f,\phi).
\end{align}
We present the details for solving Eq.~\ref{equ:minmax} in Algorithm~\ref{alg:debias}. During the training process, we start from a pre-trained robust model and we test its class-wise standard / boundary errors on a separated validation set (step 5 and 6). We check whether the current model $f$ violates some constraints in Eq.~\ref{equ:constrain2}. For example, if there exists a class ``i'', whose standard error is higher than the average by a larger margin: $\mathcal{R}_{\text{nat}}(f,i) - \mathcal{R}_{\text{nat}}(f) - \tau_1> 0$. We will adjust its multiplier term $\phi^i_{\text{nat}}$ according to the extent of violation (step 7). As a result, we increase the training weight for the standard error $\mathcal{R}_\text{nat}(f,i)$ for the class $i$. We also adjust the multiplier for boundary errors at the same time (step 8). Next, fixing the multipliers $\phi$, the algorithm will solve the inner-minimization to optimize the model $f$. By repeating the steps 4-9, the model $f$ and Lagrangian multiplier $\phi$ will be alternatively updated to achieve the equilibrium until we finally reach an optimal model that satisfies the fairness constraints. Note that we denote the framework with the proposed Reweight strategy as FRL (Reweight).

\begin{algorithm}[h]
\begin{algorithmic}[1]
\setstretch{1.3}
\STATE \textbf{Input:} Fairness constraints specified by $\tau_1>0$ and $\tau_2>0$, test time attacking radius $\epsilon$ and hyper-param update rate $\alpha_1, \alpha_2$ \\
\STATE \textbf{Output:} A fairly robust neural network $f$
\STATE Initialize network with a pre-trained robust model \\
Set $\phi^i_\text{nat} = 0$, $\phi^i_\text{bndy} = 0$ and $\phi = (\phi_\text{nat},\phi_\text{bndy})$, 
\REPEAT
\STATE $\mathcal{R}_\text{nat}(f), \mathcal{R}_\text{nat}(f,i) = \text{EVAL}(f)$ 
\hfill 
\STATE $\mathcal{R}_\text{bndy}(f), \mathcal{R}_\text{bndy}(f,i) = \text{EVAL}(f,\epsilon)$ 
\hfill 
\STATE $\phi^i_\text{nat} = \phi^i_\text{nat} + \alpha_1\cdot(\mathcal{R}_\text{nat}(f,i)-\mathcal{R}_\text{nat}(f)  - \tau_1)$ \hfill
\STATE$\phi^i_\text{bndy} = \phi^i_\text{bndy} + \alpha_2\cdot(\mathcal{R}_\text{bndy}(f,i)-\mathcal{R}_\text{bndy}(f)  - \tau_2)$
\hfill
\STATE $f \leftarrow{\text{TRAIN}(f, \phi, \epsilon)}$ \hfill
\UNTIL{Model $f$ satisfies all constraints}
\caption{The Fair Robust Learning (FRL) Algorithm}
\label{alg:debias}
\end{algorithmic}
\end{algorithm}

\subsection{ReMargin for Robust Fairness}~\label{sec:remargin}
Though upweighting the cost for $\mathcal{R}_{\text{nat}}(f,i)$ has the potential to help penalize large $\mathcal{R}_\text{nat}(f,i)$ and improve the standard performance for worse groups. However, only upweighting one class's boundary error's cost could not succeed to fulfill the goal to help decrease its boundary error. In Section ~\ref{sec:ablation}, we show this fact in PGD adversarial training on CIFAR10, where we find that even we give a large weight ratio for a class's boundary error, it is not sufficient to reduce the boundary error for this class. Thus, we cannot achieve to mitigate the boundary error disparity and robust error disparity. 

To solve this problem, we propose an alternative strategy by enlarging the perturbation margin $\epsilon$.
It is evident from some existing works~\citep{tramer2020fundamental, ding2018max} that increasing the margin $\epsilon$ during adversarial training can effectively improve model's robustness against attacks under the current intensity $\epsilon$. Therefore, enlarging the adversarial margin $\epsilon$ when generating adversarial examples during training specifically for the class $i$ has the potential to improve this class's robustness and reduce the large boundary error $\mathcal{R}_\text{bndy}(f,i)$. In this work, we define this strategy as FRL~(Remargin). The FRL~(Remargin) resembles the procedure in Algorithm~\ref{alg:debias}, except for the step 7, where we instead update the adversarial margin for the boundary errors. Specifically, we change the adversarial margin $\epsilon^i$ of the class ``$i$'' as follows:
\begin{align}
    \epsilon^i = \epsilon^i \cdot \exp \left(\alpha_2^* (\mathcal{R}_\text{bndy}(f,i) - \tau_2))\right)
\end{align}
Besides FRL~(Reweight) and FRL~(Remargin), we can combine Reweight and Remargin, where we jointly update the weight of the boundary errors and change the margin.

\section{Experiment}\label{sec:experiments}
In this section, we present the experimental results to validate the effectiveness of the proposed framework (FRL) on building fairly robust DNN models. We implement and compare our proposed three strategies (i.e., Reweight, Remargin and Reweight+Remargin) on real-world data and discuss their effectiveness. The implementation of the proposed algorithms can be found via \url{ https://drive.google.com/open?id=1IjE11VpV5CGXx633Nef3McgVOoRGMK9x}. 

\subsection{Experimental Settings} \label{sec:setup}
We conduct our experiments on benchamrk adversarial learning datasets, including CIFAR10~\citep{krizhevsky2009learning} and SVHN dataset~\cite{netzer2011reading}. For both datasets, we study the algorithms under the model architectures PreAct-ResNet18 and WRN28. We only present the results of PreAct-ResNet18 in this section and we leave the results of WRN28 in Appendix~\ref{appendix:WRN28}. As baselines, we present the original performance of two popular adversarial training algorithms~\citep{madry2017towards, zhang2019theoretically}, and a debiasing method which is inherited from~\citep{agarwal2018reductions}. It is a traditional debiasing technique and we directly apply it to upweight the cost of the class with the highest robust error in the training data. Other existing unfairness debiasing methods, such as~\cite{zafar2017fairness, zhang2018mitigating} are not included in our experiment, because they are not proposed for deep learning models and have similar ideas with~\cite{agarwal2018reductions} to reweight the costs during training.

In our implementation for FRL methods, during the training process, we split the training sets to get validation sets with 300 samples in each class to help us adjust the hyperparameters. For each variant of our proposed FRL method, we pre-define the model to achieve fairness constraints to satisfy that both $\tau_1$ and $\tau_2$ are not larger than 5\% or 7\%. In the training, we start FRL from a pre-trained robust model (such as a PGD-adversarial training), and run FRL with model parameter learning rate 1e-3 and hyperparameter learning rate $\alpha_1 = \alpha_2 = 0.05$ in the first 40 epochs. Then we decay the model parameter learning rate and the hyperparameter learning rate by $0.1$ every 40 epochs. During the evaluation phase, we report each trained model's average \textit{standard error} rate, \textit{boundary error} rate and \textit{robust error} rate, as well as the worst intraclass error rate. Note that the boundary and robust errors are calculated by the PGD attack under $l_\infty$-norm 8/255.


\begin{table*}[t]
\centering
\caption{Average \& worst-class standard error, boundary error and robust error for various algorithms on CIFAR10.}
\resizebox{0.85\textwidth}{!}
{
\begin{tabular}{c |c c |c c |c c} 
\hline
& \textbf{Avg. Std.} & \textbf{Worst Std.} & \textbf{Avg. Bndy.} & \textbf{Worst Bndy.} & \textbf{Avg. Rob.} & \textbf{Worst Rob.}  \\  
\hline\hline
\textbf{PGD Adv. Training}& 15.5 & 33.8 & 40.9 & 55.9 & 56.4 & 82.7\\ 
\textbf{TRADES$(1/\lambda = 1)$} &\textbf{14.6} & 31.2 & 43.1 & 64.6 & 57.7 & 84.7\\ 
\textbf{TRADES$(1/\lambda = 6)$} &19.6 & 39.1 & \textbf{29.9} & 49.5 & \textbf{49.3} & 77.6\\ 
\textbf{Baseline Reweight} &19.2&28.3&39.2&53.7&58.2&80.1\\ 
\hline
\textbf{FRL(Reweight, 0.05)} &16.0 & \textbf{22.5} & 41.6 & 54.2 & 57.6 & 73.3\\ 
\textbf{FRL(Remargin, 0.05)}&16.9 & 24.9 & 35.0 & 50.6 & 51.9 & 75.5\\ 
\textbf{FRL(Reweight+Remargin, 0.05)} &17.0 & 26.8 & 35.7 & \textbf{44.5} & 52.7 & \textbf{69.5}\\ 
\hline
\textbf{FRL(Reweight, 0.07)} &16.1 & 23.8 & 38.4 & 55.2 & 54.0 & 75.2\\ 
\textbf{FRL(Remargin, 0.07)}&16.9 & 26.0 & 37.4 & 51.6 & 53.5 & 75.1\\ 
\textbf{FRL(Reweight+Remargin, 0.07)} &17.1 & 26.7 & 36.7 & 48.3 & 53.8 & 70.2\\ 
\hline
\end{tabular}}
\label{tab:debias}
\end{table*}

\begin{table*}[t]
\centering
\caption{Average \& worst-class standard error, boundary error and robust error for various algorithms on SVHN. }
\resizebox{0.85\textwidth}{!}
{
\begin{tabular}{c |c c |c c |c c} 
\hline
& \textbf{Avg. Std.} & \textbf{Worst Std.} & \textbf{Avg. Bndy.} & \textbf{Worst Bndy.} & \textbf{Avg. Rob.} & \textbf{Worst Rob.}  \\  
\hline\hline
\textbf{PGD Adv. Training}& 9.4 & 19.8 & 37.0 & 53.9 & 46.4 & 73.7\\ 
\textbf{TRADES$(1/\lambda = 1)$} & 9.9 & 18.6 & 39.1 & 60.6 & 48.0 & 78.3\\ 
\textbf{TRADES$(1/\lambda = 6)$} &10.5 & 23.4 & \textbf{32.5} & 52.5 & \textbf{43.1} & 76.6\\ 
\textbf{Baseline Reweight} &8.8& 17.4&39.3&54.7&48.2&72.1\\ 
\hline
\textbf{FRL(Reweight, 0.05)} &7.9 & \textbf{13.3} & 38.2 & 56.4 & 46.1 & 69.7\\ 
\textbf{FRL(Remargin, 0.05)}&9.2 & 17.4 & 39.7 & \textbf{49.6} & 48.9 & 67.0\\ 
\textbf{FRL(Reweight+Remargin, 0.05)} &\textbf{7.7} & 12.8 & 36.2 & 51.2 & 43.9 & \textbf{64.0}\\ 
\hline
\textbf{FRL(Reweight, 0.07)} &8.0 & 13.6 & 37.2 & 54.2 & 45.0 & 67.8\\ 
\textbf{FRL(Remargin, 0.07)}&8.5 & 14.2 & 36.9 & 50.6 & 45.5 & 64.8\\ 
\textbf{FRL(Reweight+Remargin, 0.07)} &8.3 & 15.4 & 36.7 & 51.4 & 45.0 & 64.9\\ 
\hline
\end{tabular}}
\label{tab:debias2}
\end{table*}

\subsection{Fairness Performance} 
 
Table~\ref{tab:debias} shows each algorithm's performance on the CIFAR10 dataset, including each variant of FRL under the fairness constraints $\tau_1 = \tau_2 = 5\%$ and  $\tau_1 = \tau_2 = 7\%$.
From the experimental results, we can see that all FRL algorithms reduce the worst-class standard errors and robust errors under different degrees. FRL~(Reweight) has the best performance to achieve the minimal ``worst-class'' standard error. Compared to vanilla methods, it has around $10\%$ reduction to the worst class's standard error. However, it cannot equalize the boundary errors adequately. Alternatively, FRL~(Remargin) is more effective than FRL~(Reweight) to decrease the worst class boundary error. Furthermore, their combination FRL~(Reweight + Remargin) is the most effective way to reduce the worst-class boundary and worst-class robust errors. It can accomplish to train a robust classifier with the worst-class robust error around $70\%$, which is $10\sim15\%$ lower than vanilla adversarial training methods. The baseline method (Baseline Reweight)~\citep{agarwal2018reductions} can only help decrease the worst-class standard error but cannot equalize boundary errors or robust errors. These results suggest that the FRL method can mitigate the fairness issues by improving the model's standard and robustness performance on the worst classes.

Notably, from the results in Table~\ref{tab:debias}, we find that those methods, which use the strategy ``Remargin'', usually have a $1\sim2\%$ larger average standard error, compared to the PGD Adversarial Training or TRADES ($1/\lambda = 1$). This is because ``Remargin'' increases the perturbation margin $\epsilon$ for some classes during training. According to the existing works ~\cite{tramer2020fundamental, shafahi2019adversarial}, using a large perturbation margin might degrade the model's standard performance generalization. Thus, in this work we limit every training sample's perturbation margin does not exceed $16/255$ to guarantee that the model gives an acceptable average standard performance. On the other hand, because the ``Remargin'' strategies increase the perturbation margin, their average boundary errors are smaller than the vanilla methods. As a result, the average robust errors are comparable with the vanilla methods or even have $2\sim3\%$ improvement.

In Table~\ref{tab:debias2}, we present the experimental results on the SVHN dataset. We have similar observations as those on CIFAR10. FRL (Reweight) can achieve the minimal worst-class standard error and FRL~(Reweight+Remargin) gives the minimum worst-class robust error. One intriguing fact is that FRL methods, including those which use ``Remargin'', do not cause an increase of the average standard error. Instead, each FRL method results in  $1\sim2\%$ decrease of the average standard error. This might be due to the reason that these methods help improve the worst-class standard error by a large margin.

\subsection{Ablation Study}\label{sec:ablation}
From the results in Table~\ref{tab:debias}, we find that FRL~(Reweight) is not effective to improve the worst-class boundary error. As a result, it cannot sufficiently equalize the robustness performance between classes. In this subsection, we study the potential reasons that cause this fact. To have a closer look at this issue, we implement adversarial training on CIFAR10 dataset in two groups of experiments following the basic settings described in Section~\ref{sec:setup}. For each group, we only upweight a chosen class's boundary error by different ratios (from 1.0 to 4.5 times of other classes) and keep other classes' weights fixed. Then, for each model, we present this model's standard / robustness performance for this class in Figure~\ref{fig:rw} (left). 
As a comparison, we also show the cases where we increase the perturbation margin (1.0 to 2.5 times of the original margin) (Figure~\ref{fig:rw} (right)). 
\begin{figure}[h]
\subfloat[Upweight Boundary Error]{\label{clean}
\begin{minipage}[c]{0.24\textwidth}
\centering
\includegraphics[width = 1\textwidth]{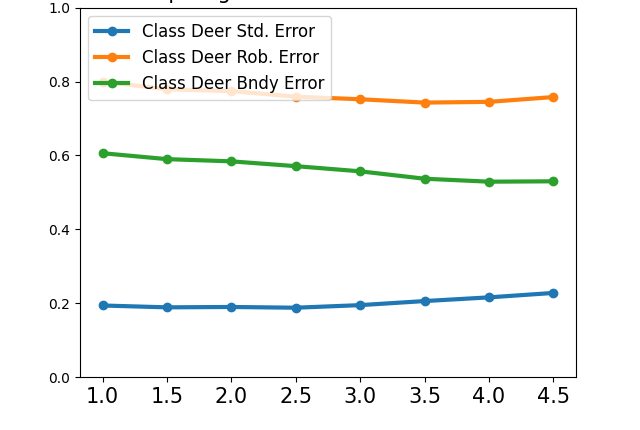}
\end{minipage}
}
\subfloat[Increase Perturbation Margin]{
\begin{minipage}[c]{0.24\textwidth}
\centering
\includegraphics[width = 1\textwidth]{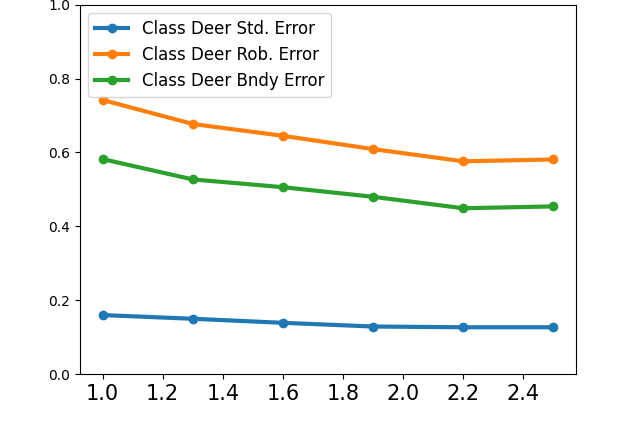}
\end{minipage}
}
\caption{The effect of upweighting on  boundary error (left) and standard error (right) for the class ``deer''.}
\label{fig:rw}
\end{figure}
From Figure~\ref{fig:rw}, we find that when we increase the weight only for the class ``deer'', it results in the boundary error for this class to decrease but also increasing its standard error. Thus, reweight only acts to leverage the inner-class boundary error and standard error. It cannot improve the robustness of this class over other classes to solve the fairness relationship between classes.
As a contrast, increasing the margin will not increase the standard error while it effectively decreases this class's boundary error and robust error. Therefore, from the results in Table~\ref{tab:debias} and \ref{tab:debias2}, we observe that ``Remargin'' based methods can successfully improve the worst-class robustness performance.

\section{Related Work}

\noindent \textbf{Adversarial Attacks and Adversarial Training.}
The existence of adversarial attacks~\citep{goodfellow2014explaining, szegedy2013intriguing, carlini2017towards} causes huge concerns when people adopt machine learning models in various application domains~\citep{xu2019adversarial, jin2020adversarial}. As countermeasures against adversarial examples, adversarial training (robust optimization) algorithms~\citep{goodfellow2014explaining, madry2017towards, zhang2019theoretically, shafahi2019adversarial, zhang2019you} are formulated as a min-max problem that directly minimize the model's risk on the adversarial samples. Another mainstream of defense methods are certified defense, which aims to provide provably robust DNNs under $l_p$ norm bound~\citep{wong2018provable, cohen2019certified} and guarantee the robustness. 


\noindent \textbf{Fairness in Machine Learning \& Imbalanced Dataset.}
Fairness issues recently draw much attention from the community of machine learning. These issues can generally divided into two categorizations: (1) prediction outcome disparity~\citep{zafar2017fairness}; and (2) prediction quality disparity~\citep{buolamwini2018gender}. Unlike existing works, this work is the first study the unfairness issue in the adversarial setting. We also mention the imbalanced data learning problem~\citep{he2009learning, lin2017focal} as one related topic of our work. Since in our work, (i.e..~Figure \ref{fig:fair}), we show that the prediction performance differences are indeed existing between different classes. This phenomenon is also well studied in imbalanced data problems or long-tail distribution learning problems~\citep{wang2017learning} where some classes have much fewer training samples than others. However, in our case, we show that this unfairness problem can generally happen in balanced datasets, so it desires new scopes and methods for further study.

\noindent \textbf{Fairness in Robust Learning} A parallel and independent work \cite{nanda2020fairness} also figures out that the phenomenon of class-wise unequal robustness can happen for many deep learning tasks in the wild. While our work is more focused on adversarial training algorithms and we argue that adversarial training methods can have the property to cause these fairness phenomena. In addition, our work also discusses various potential mitigation methods to achieve more balanced robustness for adversarial training methods.

\section{Conclusion}

In this work we first empirically and theoretically uncover one property of adversarial training algorithms: it can cause serious disparity for both standard accuracy and adversarial robustness between different classes of the data. As the first attempt to mitigate the fairness issues from adversarial training, we propose the Fair Robust Learning (FRL) framework. We validate the effectiveness of FRL on benchmark datasets. In the future, we want to examine if the fairness issue can be observed in other types of defense methods.


\balance
\bibliography{sample}
\bibliographystyle{icml2021}

\newpage
\appendix

\section{Appendix}

\subsection{More Results of Preliminary Studies}\label{app:pre}

In this section, we show more preliminary results on the robust fairness phenomenon of adversarial training in various settings. In addition to the results shown in Section~\ref{pre}, we present the results in the settings with one more architecture (WRN28), one more type of adversarial attack ($l_2$-norm attack), one more defense method (Randomized Smoothing) and one more dataset (SVHN). From all these settings, we observe the similar phenomenon as in Section~\ref{pre}, which show that the fairness phenomenon can be generally happening in adversarial training under different scenarios and can be a common concern during its application. Furthermore, we also present the detailed results as in Table~\ref{tab:diff}, to show the fact that adversarial training usually gives an unequal influence on different classes, which can be a reason that causes this fairness phenomenon.

In detail, for each dataset under PreAct ResNet18 architecture, for each adversarial training algorithm (including PGD-adversarial training~\cite{madry2017towards} and TRADES~\cite{zhang2017age}), we train the models following that as suggested by the original papers. We train the models for 200 epochs with learning rate 0.1 and decay the learning rate at the epoch 100 and 150 by factor 0.1. During the evaluation phase, we report the trained model's classwise \textit{standard error} rate and \textit{robust error} rate. In general settings without explicit mention, we study the models' robustness against $l_\infty$-norm adversarial attack under $8/255$, where we implement PGD attack algorithm for 20 steps for robustness evaluation.

\subsubsection{Robust Fairness in WRN28 in CIFAR10}
Figure~\ref{fig:fair_wrn28} presents robust fairness issues in CIFAR10 dataset under WRN28 models. Note that in Section~\ref{pre} we also presented the corresponding results under PreAct ResNet18 models in Figure~\ref{fig:fair}. We can observe the similar phenomenon about the robust fairness issues under both models. Moreover, as clear evidence of the unequal effect of adversarial training among different classes, in Table~\ref{tab:diff_complete} and Table~\ref{tab:diff_wrn28}, we compare the classwise standard error and robust error between natural training and PGD adversarial training. From the experimental results, we get the conclusion that adversarial training usually increases a larger error rate for the classes, such as ``dog'' and ``cat'', which originally have larger errors in natural training. Similarly, adversarial training will also give less help to reduce the robust errors for these classes.

\subsubsection{Robust Fairness in $l_2$-norm Adversarial Training}

Figure~\ref{fig:fair_l2} presents the robust fairness issues of adversarial training methods which target on $l_2$-norm attacks in CIFAR10 dataset. We further confirm the existence of robust unfairness in adversarial training methods. In Figure~\ref{fig:fair_l2}, we present the classwise standard errors and robust errors, which target on $l_2$-norm 0.5 adversarial attack. During the robustness evaluation, we implement PGD attack algorithm with step size 0.1 for 20 steps.

\subsubsection{Robust Fairness for Certified Defenses}

Certified defenses are another main type of effective defense strategies. Even though certified defenses do not train in the same way as traditional adversarial training methods, which train the models on the generated adversarial examples, they minimize the probability of the existence of adversarial examples near the input data. This process also implicitly minimizes the model's overall robust error. Thus, in this section we study whether this certified defense will have robust fairness issues. As a representative, we implement Randomized Smoothing~\cite{cohen2019certified}, which is one state-of-the-art methods to certifiably defense against $l_2$-norm adversarial attacks. In this experiment, we run Randomized Smoothing against $l_2$-norm 0.5 attacks in CIFAR10 dataset and report its class-wise certified standard error and certified robust error under different intensities. 

\begin{figure}[h]
    \centering
    \includegraphics[width = 0.75\linewidth]{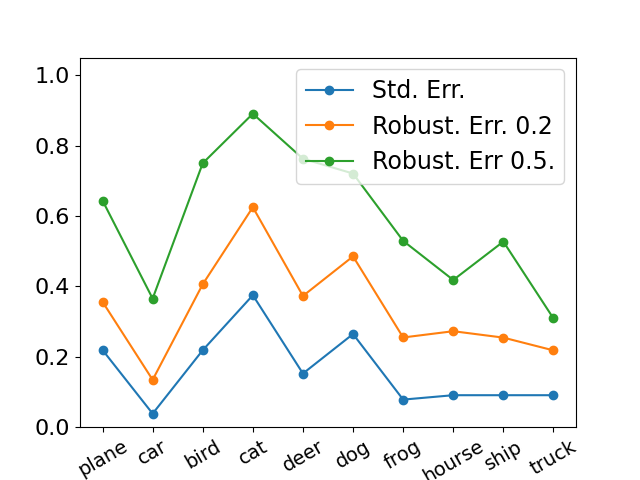}
    \caption{Randomized Smoothing on CIFAR10}
    \label{fig:multi_example}
\end{figure}
The results also suggest that the Randomized Smoothing certified defense method presents the similar disparity phenomenon as the traditional adversarial training methods. Moreover, it also preserves the similar classwise performance relationship, i.e., it has both high standard \& robustness error on the classes ``cat'' and ``dog'', but has relatively low errors on ``car'' and ``ship''. 
\subsubsection{Robust Fairness on SVHN Dataset}

Figure~\ref{fig:fair_resnet18} presents the robust fairness issues of adversarial training methods in SVHN dataset under PreAct ResNet18 model. From the experimental results, we also observe the strong disparity of classwise standard errors and robust errors, which do not exist in natural training. In particular, the classes ``3'' and ``8'' have the largest standard error in a naturally trained model. After adversarial training, these two classes also have the largest standard error increases among all classes, as well as the least robust error decreases. As a result, there is also a significant disparity of the standard / robustness performance among the classes. The full results are shown in Table~\ref{tab:diff_svhn}.

\begin{figure*}[h]
\subfloat[Natural Training]{\label{clean}
\begin{minipage}[c]{0.25\textwidth}
\centering
\includegraphics[width = 1\textwidth]{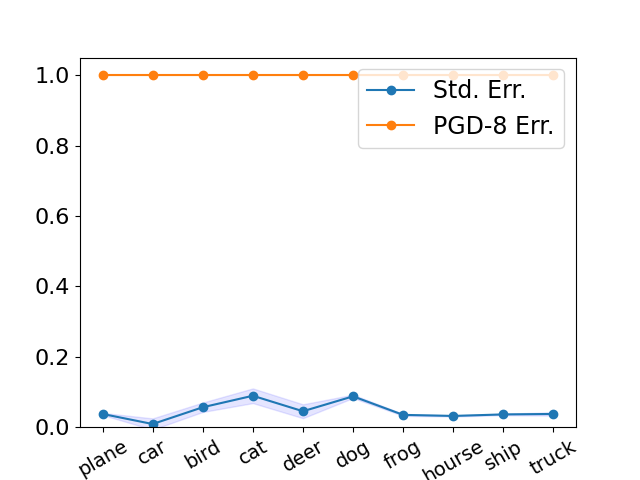}
\end{minipage}
}
\subfloat[PGD Adversarial Training]{ \label{pgd_2}
\begin{minipage}[c]{0.25\textwidth}
\centering
\includegraphics[width = 1\textwidth]{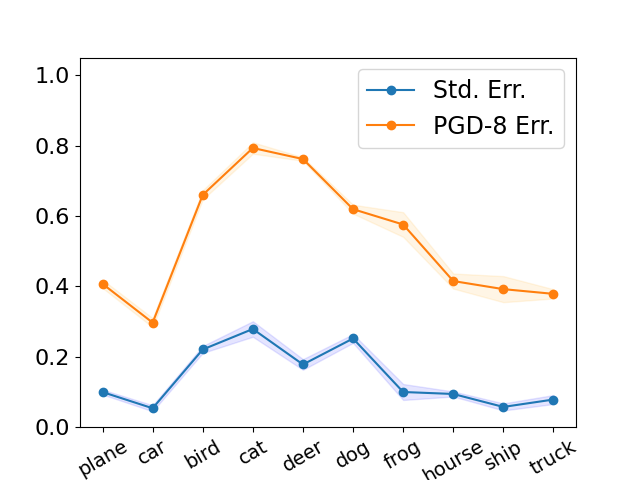}
\end{minipage}
}
\subfloat[TRADES ($1/\lambda = 1$)]{  \label{pgd_4}
\begin{minipage}[c]{0.25\textwidth}
\centering
\includegraphics[width = 1\textwidth]{figures/trades1.0_wrn28.png}
\end{minipage}
}
\subfloat[TRADES ($1/\lambda = 6$)]{ \label{pgd_8}
\begin{minipage}[c]{0.25\textwidth}
\centering
\includegraphics[width = 1\textwidth]{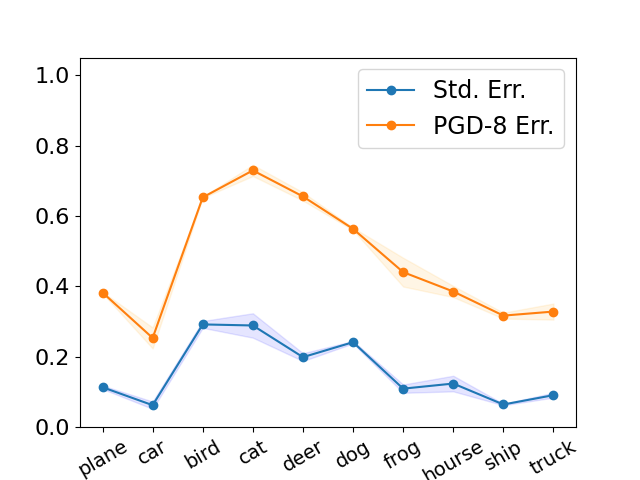}
\end{minipage}
}
\caption{The class-wise performance of natural / adversarial training (target on $l_\infty$-norm 8/255 attack) on CIFAR10 under WRN28.}
\label{fig:fair_wrn28}
\end{figure*}

\begin{figure*}[h!]
\centering
\subfloat[Natural Training]{ \label{pgd_2}
\begin{minipage}[c]{0.25\textwidth}
\centering
\includegraphics[width = 1\textwidth]{figures/clean_train.png}
\end{minipage}
}
\subfloat[PGD Adversarial Training]{\label{clean}
\begin{minipage}[c]{0.25\textwidth}
\centering
\includegraphics[width = 1\textwidth]{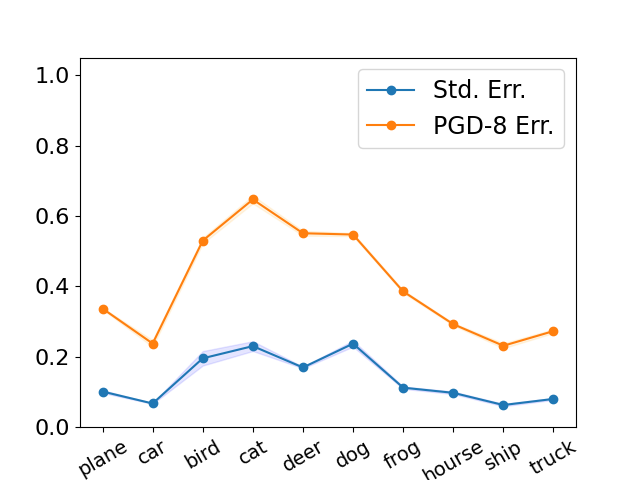}
\end{minipage}
}
\subfloat[TRADES ($1/\lambda = 6$)]{ \label{pgd_2}
\begin{minipage}[c]{0.25\textwidth}
\centering
\includegraphics[width = 1\textwidth]{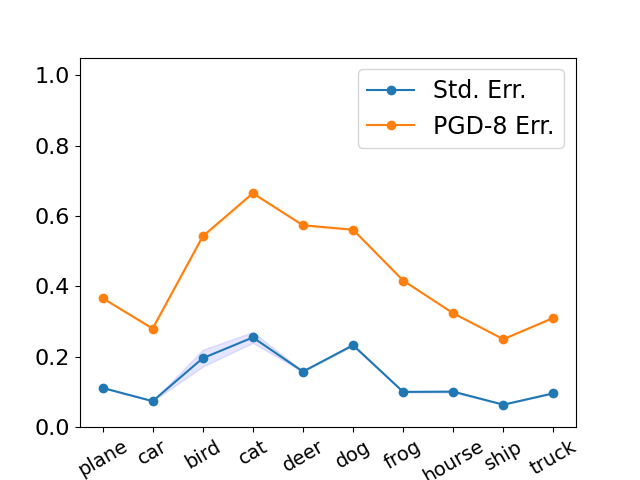}
\end{minipage}
}
\subfloat[TRADES ($1/\lambda = 6$)]{  \label{pgd_4}
\begin{minipage}[c]{0.25\textwidth}
\centering
\includegraphics[width = 1\textwidth]{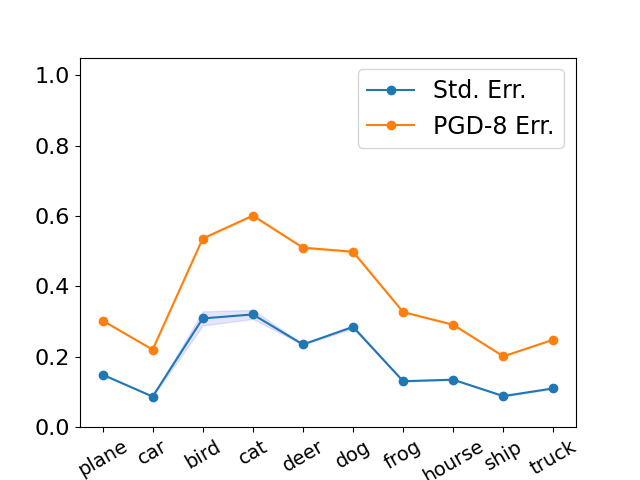}
\end{minipage}
}
\caption{The class-wise performance of natural / adversarial training (target on $l_2$-norm 0.5 attack) on CIFAR10 under PreActResNet18}
\label{fig:fair_l2}
\end{figure*}

\begin{figure*}[h!]
\subfloat[Natural Training]{\label{clean}
\begin{minipage}[c]{0.25\textwidth}
\centering
\includegraphics[width = 1\textwidth]{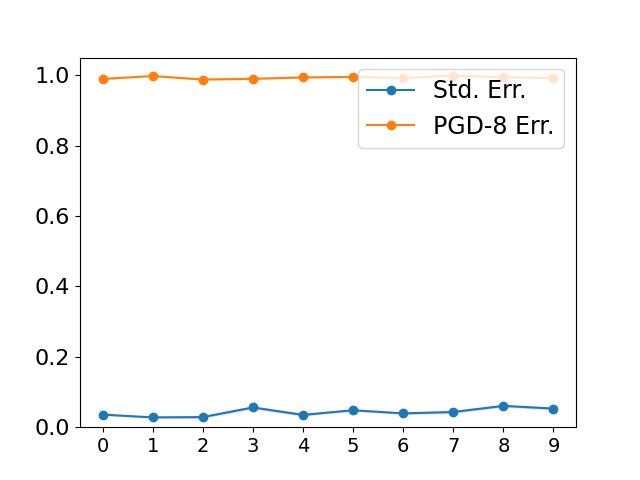}
\end{minipage}
}
\subfloat[PGD Adversarial Training]{ \label{pgd_2}
\begin{minipage}[c]{0.25\textwidth}
\centering
\includegraphics[width = 1\textwidth]{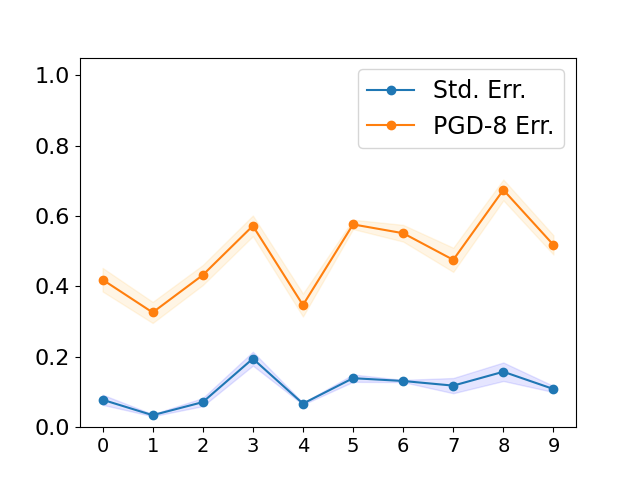}
\end{minipage}
}
\subfloat[TRADES ($1/\lambda = 1$)]{  \label{pgd_4}
\begin{minipage}[c]{0.25\textwidth}
\centering
\includegraphics[width = 1\textwidth]{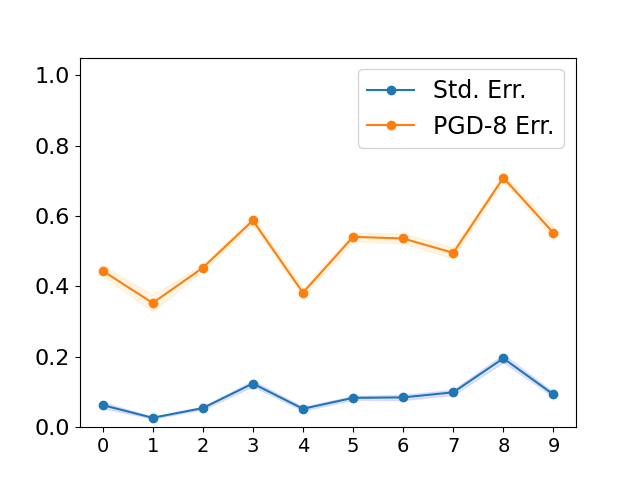}
\end{minipage}
}
\subfloat[TRADES ($1/\lambda = 6$)]{ \label{pgd_8}
\begin{minipage}[c]{0.25\textwidth}
\centering
\includegraphics[width = 1\textwidth]{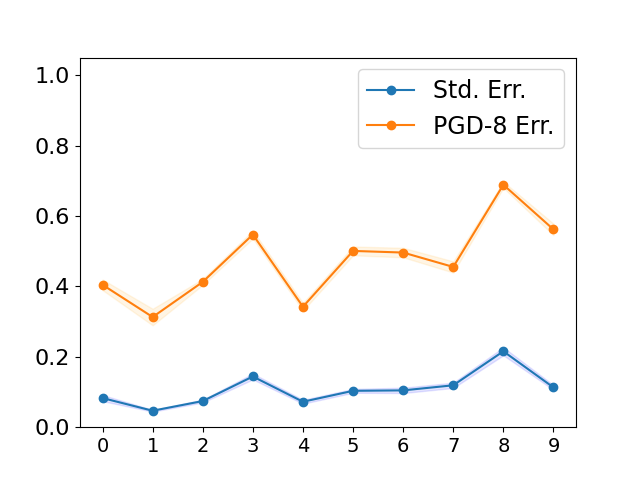}
\end{minipage}
}
\caption{The class-wise performance of natural / adversarial training (target on $l_\infty$-norm 8/255 attack) on SVHN under PreActResNet18.}
\label{fig:fair_resnet18}
\end{figure*}

\begin{table*}[h]
\centering
\caption{The Changes of Standard \& Robust Error in Natural \& Adversarial Training in CIFAR10 on PreAct ResNet18.}
\scalebox{0.9}{
\begin{tabular}{c |c c c c c c c c c c} 
\hline
Std. Error & plane & car & bird & cat & deer & dog & frog & horse & ship  &truck \\  
\hline\hline
\textbf{Natural Training} & 4.0 & 1.8 & 6.4 & 11.3 & 6.1 & 10.0 & 5.1 & 5.2 & 3.5 & 4.3\\ 
\textbf{PGD Adv. Training} & 11.7 & 6.1 & 23.3 & 34.8 & 20.8 & 26.9 & 12.6 & 9.8 & 6.4 & 9.5\\ 
\textbf{Diff. (Adv. - Nat.)} & 7.7 & 4.3 & 16.9 & 23.5 & 14.6 & 16.9 & 7.5 & 4.6 & 2.9 & 5.2\\ 
\hline
\end{tabular}}

\vspace{0.2cm}
\scalebox{0.9}{
\begin{tabular}{c |c c c c c c c c c c} 
\hline
Rob. Error & plane & car & bird & cat & deer & dog & frog & horse & ship  &truck \\  
\hline\hline
\textbf{Natural Training} & 100  & 100  & 100  & 100  & 100  & 100  & 100  & 100  & 100  & 100 \\
\textbf{PGD Adv. Training} & 44.9 & 34.3 & 68.4 & 82.7 & 74.7 & 66.4 & 51.5 & 47.0 & 40.8 & 42.3\\ 
\textbf{Diff. (Adv. - Nat.)} & -55.1 & -65.7 & -31.6 & -17.3 & -25.3 & -33.5 & -48.5 & -53.0 & -59.2 & -57.7\\ 
\hline
\end{tabular}}
\label{tab:diff_complete}
\end{table*}
\begin{table*}[h!]
\centering
\caption{The Changes of Standard \& Robust Error in Natural \& Adversarial Training in CIFAR10 on WRN28.}
\scalebox{0.9}{
\begin{tabular}{c |c c c c c c c c c c} 
\hline
Std. Error & plane & car & bird & cat & deer & dog & frog & horse & ship  &truck \\  
\hline\hline
\textbf{Natural Training} & 3.9 & 1.9 & 6.4 & 10.0 & 6.2 & 9.0 & 5.0 & 4.6 & 3.3 & 4.2\\
\textbf{PGD Adv. Training} & 10.0 & 4.9 & 21.4 & 24.6 & 17.4 & 26.2 & 12.4 & 9.4 & 6.3 & 9.2\\ 
\textbf{Diff. (Adv. - Nat.)} & 6.1 & 3.0 & 15.0 & 14.6 & 11.2 & 17.2 & 7.4 & 4.8 & 3.0 & 5.0\\ 
\hline
\end{tabular}}

\vspace{0.2cm}
\scalebox{0.9}{
\begin{tabular}{c |c c c c c c c c c c} 
\hline
Rob. Error & plane & car & bird & cat & deer & dog & frog & horse & ship  &truck \\  
\hline\hline
\textbf{Natural Training} & 100  & 100  & 100  & 100  & 100  & 100  & 100  & 100  & 100  & 100 \\
\textbf{PGD Adv. Training} & 41.4 & 29.3 & 65.8 & 77.2 & 75.5 & 61.6 & 60.9 & 40.7 & 40.8 & 39.4\\ 
\textbf{Diff. (Adv. - Nat.)} & -58.6 & -70.7 & -34.2 & -22.8 & -24.5 & -38.4 & -39.1 & -59.3 & -59.2 & -60.6\\ 
\hline
\end{tabular}}
\label{tab:diff_wrn28}
\end{table*}

\begin{table*}[h!]
\centering
\caption{The Changes of Standard \& Robust Error in Natural \& Adversarial Training in SVHN in PreAct ResNet18.}
\scalebox{0.9}{
\begin{tabular}{c |c c c c c c c c c c} 
\hline
Std. Error & ``0'' & ``1'' & ``2'' & ``3'' & ``4'' & ``5'' & ``6'' & ``7'' & ``8'' & ``9'' \\  
\hline\hline
\textbf{Natural Training} & 3.6 & 2.7 & 3.2 & 5.8 & 4.0 & 5.1 & 3.6 & 4.6 & 6.1 & 5.3\\
\textbf{PGD Adv. Training} &8.8 &6.2  &7.9 &15.8 & 6.9 & 13.2
& 13.3 & 13.4 &19.8 &11.4\\
\textbf{Diff. (Adv. - Nat.)} & 5.2 & 3.5 & 4.8 & 10.1 & 4.9 & 8.1 & 9.6 & 8.9 & 13.6 & 6.4\\ 
\hline
\end{tabular}}

\vspace{0.2cm}
\scalebox{0.9}{
\begin{tabular}{c |c c c c c c c c c c} 
\hline
Std. Error & ``0'' & ``1'' & ``2'' & ``3'' & ``4'' & ``5'' & ``6'' & ``7'' & ``8'' & ``9'' \\   
\hline\hline
\textbf{Natural Training} & 100  & 100  & 100  & 100  & 100  & 100  & 100  & 100  & 100  & 100 \\
\textbf{PGD Adv. Training} & 47.2 & 38.8 & 49.0 & 63.4 & 41.9 & 57.1 & 62.6 & 55.2 & 73.7 & 57.1\\ 
\textbf{Diff. (Adv. - Nat.)} & -52.8 & -61.2 & -51.1 & -36.6 & -58.1 & -42.9 & -37.4 & -44.8 & -26.3 & -42.9\\ 
\hline
\end{tabular}}
\label{tab:diff_svhn}
\end{table*}

\subsection{Theoretical Proof of Section~\ref{sec:toy}}~\label{app:thy}
In this section, we formally calculate the classwise standard \& robust errors in an optimal linear classifier and an optimal linear robust classifier. Then, we present our main conclusion that robust optimization will unequally influence the performance of the two classes and therefore result in a severe performance disparity.  

\subsubsection{Proof of Theorem 1}
In this subsection, we study an optimal linear classifier which minimizes the average standard error. By calculating its standard errors, we can get the conclusion that the class ``+1'' in distribution $\mathcal{D}$ is indeed harder than class ``-1''. We first start from a lemma to calculate the weight vector of an optimal linear model. 

\begin{lemma}[Weight Vector of an Optimal Classifier]
\label{thm:optimal_nat}
For the data following the distribution $\mathcal{D}$ defined in Eq.~\ref{eq:data_dist1}, an optimal linear classifier $f_\text{nat}$ which minimizes the average standard classification error:
\begin{align*}
\begin{split}
    f_\text{nat}(x) &= \text{sign} ( \langle w_\text{nat} , x \rangle + b_\text{nat} ) \\
    \text{where}~~  w_\text{nat}, b_\text{nat} &= \argmin_{w,b} ~\text{Pr.}(\text{sign}(\langle w , x \rangle+b) \neq y)
\end{split}
\end{align*}
has the optimal weight that satisfy: $w_\text{nat} = \bf 1$. 
\end{lemma}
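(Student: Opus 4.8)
The plan is to reduce the $d$-dimensional optimization over $(w,b)$ to a one-parameter problem in the scalar $t = \langle w, \theta\rangle$, and then maximize $t$ by Cauchy--Schwarz. First I would note that because the classifier $\text{sign}(\langle w, x\rangle + b)$ is invariant under positive rescaling of $(w,b)$, I may assume $\|w\| = 1$. For $x$ drawn from either Gaussian, the projection $\langle w, x\rangle$ is a univariate normal: conditional on $y=+1$ it has mean $\langle w, \theta\rangle$ and variance $\sigma_{+1}^2$, and conditional on $y=-1$ it has mean $-\langle w, \theta\rangle$ and variance $\sigma_{-1}^2$. Writing $\Phi$ for the standard normal CDF and $t = \langle w, \theta\rangle$, the average standard error becomes
\begin{align*}
\text{err}(w,b) = \frac{1}{2}\,\Phi\!\left(\frac{-t-b}{\sigma_{+1}}\right) + \frac{1}{2}\,\Phi\!\left(\frac{-t+b}{\sigma_{-1}}\right),
\end{align*}
which depends on the high-dimensional direction of $w$ only through the single scalar $t$ (and on the bias $b$).

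Next I would fix the direction of $w$, hence $t$, and minimize over $b$, calling the result $g(t)$. The key monotonicity observation is that for every fixed $b$ the right-hand side above is a \emph{decreasing} function of $t$, since $\Phi$ is increasing and both of its arguments decrease as $t$ grows. A one-line envelope argument then shows $g$ is non-increasing: if $b^*$ is optimal for $t_1 < t_2$, then $g(t_1) = \text{err}(t_1, b^*) \ge \text{err}(t_2, b^*) \ge g(t_2)$. Hence minimizing the standard error amounts to maximizing $t = \langle w, \theta\rangle$ over unit vectors $w$.

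Finally, by Cauchy--Schwarz, $\langle w, \theta\rangle \le \|\theta\|$ with equality iff $w$ is the positive multiple of $\theta = \eta\mathbf{1}$, i.e., $w \parallel \mathbf{1}$. Undoing the normalization and invoking scale invariance of $\text{sign}(\cdot)$, I conclude $w_\text{nat} = \mathbf{1}$. I would also note in passing that the event $\langle w,x\rangle + b = 0$ has probability zero, so the tie-breaking convention in $\text{sign}$ is irrelevant.

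The main obstacle is not any single calculation but guarding the reduction itself: one must resist invoking the Bayes-optimal rule, because with unequal variances $\sigma_{+1}\neq\sigma_{-1}$ the Bayes decision boundary is quadratic rather than linear. The content of the lemma is precisely that \emph{within} the restricted linear family the optimal weight direction is still forced to be $\mathbf{1}$, and what makes this work is the isotropy ($\sigma^2 I$) of each Gaussian, which collapses all dependence on the direction of $w$ into the inner product $\langle w,\theta\rangle$. Care is therefore needed to verify that the two scalars $t$ and $b$ genuinely capture the full error and that the minimization over $b$ does not disturb the monotonicity in $t$.
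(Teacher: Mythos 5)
Your proof is correct, but it takes a genuinely different route from the paper. The paper argues by a coordinate-exchange contradiction: assuming two coordinates $w_i < w_j$ of an optimal weight vector differ, it replaces $w_i$ by $w_j$ and asserts that both class-conditional errors decrease, contradicting optimality. You instead normalize $\|w\|=1$, observe that by isotropy the error depends on $w$ only through the scalar $t=\langle w,\theta\rangle$ (together with $b$), show via an envelope argument that the partially minimized error $\min_b \mathrm{err}(t,b)$ is non-increasing in $t$, and then maximize $t$ by Cauchy--Schwarz to force $w\parallel\mathbf{1}$. Your reduction is the more careful of the two: the paper's exchange step changes not only the projected mean $\eta\sum_k w_k$ but also the projected variance $\sigma_y^2\|w\|^2$, and the claim that the error nevertheless drops in each class for a \emph{fixed} $b$ is not self-evident from the displayed expressions --- it is really the ratio $\sum_k w_k/\|w\|$ that governs the error, which is exactly what your Cauchy--Schwarz step isolates. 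Your version also makes explicit two points the paper leaves implicit: the scale invariance of $\mathrm{sign}(\langle w,x\rangle+b)$ that lets one normalize $w$ (and explains why the conclusion $w_\text{nat}=\mathbf{1}$ is a normalization rather than a unique minimizer), and the warning that the unrestricted Bayes rule is quadratic when $\sigma_{+1}\neq\sigma_{-1}$, so the lemma is genuinely a statement about the restricted linear family. The one hypothesis you should state explicitly is $\eta>0$, which is needed for the Cauchy--Schwarz equality case to select the \emph{positive} multiple of $\mathbf{1}$; this is implicit throughout the paper's setup. What the paper's approach buys is brevity and no need to introduce the projection formalism; what yours buys is a complete and checkable argument in which the mean--variance trade-off is handled correctly.
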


\begin{proof}[Proof of Lemma~\ref{thm:optimal_nat}] In the proof we will use $w = w_\text{nat}$ and $b = b_\text{nat}$ for simplicity. Next, we will prove $w_1=w_2=\dots=w_d$ by contradiction. We define $G = \{1,2,\dots, d\}$ and make the following assumption: for the optimal $w$ and $b$, we assume if there exist $w_i < w_j$ for $i\neq j$ and $i,j \in G$. Then we obtain the following standard errors for two classes of this classifier with weight $w$:
\begin{align}
\label{eq:weigh1}
\begin{split}
\mathcal{R}_\text{nat}(f;-1) &= \text{Pr}\{\sum_{k\neq i, k\neq j} w_k \gN(-\eta, \sigma_{-1}^2) +b\\& + w_i \gN(-\eta, \sigma_{-1}^2) + w_j \gN(-\eta, \sigma_{-1}^2) > 0 \} \\
\mathcal{R}_\text{nat}(f;+1) &= \text{Pr}\{\sum_{k\neq i, k\neq j} w_k \gN(+\eta, \sigma_{+1}^2) +b \\&+ w_i \gN(+\eta, \sigma_{+1}^2) + w_j \gN(+\eta, \sigma_{+1}^2) < 0 \}
\end{split}
\end{align}
However, if we define a new classier $\tilde{f}$ whose weight vector $\tilde{w}$ uses $w_j$ to replace $w_i$, we obtain the errors for the new classifier:
\begin{align}
\label{eq:weigh2}
\begin{split}
\mathcal{R}_\text{nat}(\tilde{f};-1) &= \text{Pr}\{\sum_{k\neq i, k\neq j} w_k \gN(-\eta, \sigma_{-1}^2) +b \\
&+ w_j \gN(-\eta, \sigma_{-1}^2) + w_j \gN(-\eta, \sigma_{-1}^2) > 0 \} \\
\mathcal{R}_\text{nat}(\tilde{f};+1) &= \text{Pr}\{\sum_{k\neq i, k\neq j} w_k \gN(+\eta, \sigma_{+1}^2) +b \\
&+ w_j \gN(+\eta, \sigma_{+1}^2) + w_j \gN(+\eta, \sigma_{+1}^2) < 0 \}.    
\end{split}
\end{align}
By comparing the errors in Eq~\ref{eq:weigh1} and Eq~\ref{eq:weigh2}, it can imply the classifier $\tilde{f}$ has smaller error in each class. Therefore, it contradicts with the assumption that $f$ is the optimal classifier with least error. Thus, we conclude for an optimal linear classifier in natural training, it must satisfies $w_1=w_2=\dots=w_d$ and $w = \bf 1$. 
\end{proof}
Given the results in Lemma 1, we can calculate the errors of classifiers by only calculating the interception term $b_\text{nat}$ and  $b_\text{rob}$. Recall Theorem~\ref{thm:error_natural_train}, we explicitly calculate the classwise errors of an optimal classifier which minimizes the average standard error. 

\begingroup
\def\thetheorem{\ref{thm:error_natural_train}}
\begin{theorem} 
For a data distribution $\mathcal{D}$ in Eq.~\ref{eq:data_dist1}, for the optimal linear classifier $f_\text{nat}$ which minimizes the average standard classification error, it has the intra-class standard error for the two classes:
\begin{align*}
\label{eq:nat_error}
\small
\begin{split}
&\mathcal{R}_\text{nat}(f_\text{nat},-1) 
= \text{Pr.} \{\mathcal{N}(0,1)\leq  A  - K\cdot\sqrt{A^2 + q(K)}  \}
\\
& \mathcal{R}_\text{nat}(f_\text{nat},+1) 
=  \text{Pr.} \{\mathcal{N}(0,1)\leq  -K\cdot A + \sqrt{A^2 + q(K)}\}
\end{split}
\end{align*}
where $A = \frac{2}{K^2-1}\frac{\sqrt{d}\eta}{\sigma}$ and $q(K) = \frac{2\log K}{K^2-1}$ which is a positive constant and only depends on $K$,  As a result, the class ``+1'' has a larger standard error:
\begin{equation*}
    \mathcal{R}_\text{nat}(f_\text{nat},-1) <\mathcal{R}_\text{nat}(f_\text{nat},+1).
\end{equation*}
\end{theorem}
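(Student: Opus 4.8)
The plan is to exploit Lemma~\ref{thm:optimal_nat} to collapse the $d$-dimensional problem to a one-dimensional threshold problem, and then to solve a single scalar optimization over the intercept $b$. Since Lemma~\ref{thm:optimal_nat} gives $w_\text{nat} = \mathbf 1$, the classifier is $f(x) = \text{sign}(S + b)$ with $S := \sum_{i=1}^d x_i$. Writing $\sigma := \sigma_{-1}$ so that $\sigma_{+1} = K\sigma$, the projection $S$ is Gaussian in each class: $S \mid (y=-1) \sim \mathcal N(-d\eta,\, d\sigma^2)$ and $S \mid (y=+1) \sim \mathcal N(d\eta,\, dK^2\sigma^2)$. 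The model predicts $+1$ exactly when $S > -b$, so standardizing each conditional Gaussian expresses the two class errors through the standard normal CDF $\Phi$:
\begin{align*}
\mathcal R_\text{nat}(f,-1) = \Phi(u), \qquad \mathcal R_\text{nat}(f,+1) = \Phi(v),
\end{align*}
where $u := \tfrac{b-d\eta}{\sqrt d\,\sigma}$ and $v := \tfrac{-b-d\eta}{\sqrt d\,K\sigma}$ are the two standardized thresholds.

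Next I would minimize the average error $\tfrac12\Phi(u) + \tfrac12\Phi(v)$ over $b$. Differentiating and using $\tfrac{du}{db} = \tfrac1{\sqrt d\sigma}$ and $\tfrac{dv}{db} = \tfrac{-1}{\sqrt d K\sigma}$, the first-order condition reduces to the density-matching equation $K\,\phi(u) = \phi(v)$, which after taking logarithms becomes the clean relation $u^2 - v^2 = 2\log K = (K^2-1)\,q(K)$. Independently, eliminating $b$ from the definitions of $u$ and $v$ yields the linear constraint $u + Kv = -\tfrac{2\sqrt d\,\eta}{\sigma} = -(K^2-1)A$. These two equations, one quadratic and one linear, pin down the pair $(u,v)$.

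I would then solve this system. Substituting the linear relation into $u^2-v^2 = (K^2-1)q(K)$ and dividing by $K^2-1$ gives the quadratic $v^2 + 2AKv + (K^2-1)A^2 - q(K) = 0$, whose discriminant simplifies (via $A^2K^2-(K^2-1)A^2 = A^2$) to give roots $v = -KA \pm \sqrt{A^2+q(K)}$, with corresponding $u = A \mp K\sqrt{A^2+q(K)}$. Selecting the stationary point that realizes the global minimum rather than the other critical point, which I would verify by a second-order check or by comparing values (noting the average error tends to $\tfrac12$ as $b\to\pm\infty$), yields $u = A - K\sqrt{A^2+q(K)}$ and $v = -KA + \sqrt{A^2+q(K)}$. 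Plugging these back into $\Phi$ reproduces exactly the stated expressions, and $q(K) = \tfrac{2\log K}{K^2-1} > 0$ because $K>1$.

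Finally, for the strict inequality, since $\Phi$ is increasing it suffices to show $u < v$. Factoring gives $u - v = (1+K)\bigl(A - \sqrt{A^2+q(K)}\bigr)$, and since $q(K)>0$ we have $\sqrt{A^2+q(K)} > |A| \ge A$, so $u - v < 0$; hence $\mathcal R_\text{nat}(f_\text{nat},-1) = \Phi(u) < \Phi(v) = \mathcal R_\text{nat}(f_\text{nat},+1)$. The main obstacle I anticipate is precisely the optimization step: correctly identifying which of the two stationary points is the genuine minimizer and carrying the sign bookkeeping through the linear–quadratic system so that the closed form matches the claim exactly. Everything else is routine one-dimensional Gaussian manipulation made possible by the reduction $w_\text{nat}=\mathbf 1$ from Lemma~\ref{thm:optimal_nat}.
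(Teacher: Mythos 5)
Your proposal is correct and follows essentially the same route as the paper's proof: reduce to a one-dimensional threshold problem via the lemma $w_\text{nat}=\mathbf{1}$, express the two class errors as standard normal tail probabilities, and solve the first-order condition in the intercept $b$ (your density-matching equation $K\phi(u)=\phi(v)$ is exactly the paper's $\partial\mathcal{R}_\text{nat}/\partial b_\text{nat}=0$), then substitute back. Your explicit attention to selecting the correct root of the resulting quadratic is a point the paper glosses over, but it does not change the argument.
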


\begin{proof}[Proof of Theorem~\ref{thm:error_natural_train}]
From the results in Lemma 1, we define our optimal linear classifier to be $f_\text{nat}(x) = \text{sign(}\sum_{i=1}^{d}x_i+b_\text{nat})$. Now, we calculate the optimal $b_\text{nat}$ which can minimize the average standard error:
\begin{align}\label{eq:seperate_std_error}
\begin{split}
    R_\text{nat}(f) &= \text{Pr.}\{f(x) \neq y\}  \\
    & \propto   \text{Pr}\{f(x)=1 | y=-1\} + \text{Pr}\{f(x)=-1 | y=1\} \\
    &  =  \text{Pr.}\{\sum_{i=1}^{d}x_i+b_\text{nat}  > 0 | y=-1\}\\
    &+  \text{Pr}\{\sum_{i=1}^{d}x_i+b_\text{nat} < 0 | y=+1\} \\
    & = \text{Pr.} \{\gN(0,1)< - \frac{\sqrt{d}\eta}{\sigma}\ + \frac{1}{\sqrt{d}\sigma} \cdot b_\text{nat} \} \\
    & + \text{Pr.} \{\gN(0,1)<  -\frac{\sqrt{d}\eta}{K\sigma}\ - \frac{1}{K\sqrt{d}\sigma} \cdot b_\text{nat} \} \\
\end{split}
\end{align}
The optimal $b_\text{nat}$ to minimize $\mathcal{R}_\text{nat}(f)$ is achieved at the point that $\frac{\partial \mathcal{R}_\text{nat}(f)}{\partial b_\text{nat} } = 0$. Thus, we find the optimal $b_\text{nat}$:
\begin{align}\label{eq:b_nat}
b_\text{nat} = \frac{K^2+1}{K^2-1}\cdot d\eta - K\sqrt{\frac{4d^2\eta^2}{(K^2-1)^2} + q(K)d\sigma^2}
\end{align} and $q(K) = \frac{2\log K}{K^2-1}$ which is a positive constant and only depends on $K$. By incorporating the optimal $b_\text{nat}$ into Eq.~\ref{eq:seperate_std_error}, we can get the classwise standard errors for the two classes:
\begin{align*}
\begin{split}
&\mathcal{R}_\text{nat}(f_\text{nat},-1) 
= \text{Pr.} \{\mathcal{N}(0,1)\leq  A  - K\cdot\sqrt{A^2 + q(K)}  \}
\\
& \mathcal{R}_\text{nat}(f_\text{nat},+1) 
=  \text{Pr.} \{\mathcal{N}(0,1)\leq  -K\cdot A + \sqrt{A^2 + q(K)}\}
\end{split}
\end{align*}
where $A = \frac{2}{K^2-1}\frac{\sqrt{d}\eta}{\sigma}$. Since $q(K)>0$, we have the direct conclusion that $\mathcal{R}_\text{nat}(f;-1)<\mathcal{R}_\text{nat}(f;+1)$.
\end{proof}

\subsubsection{Proof of Theorem 2}
In this subsection, we focus on calculating the errors of robust classifiers which minimize the average robust errors of the model. By comparing natural classifiers and robust classifiers, we get the conclusion that robust classifiers can further exacerbate the model's performance on the ``harder'' class. Similar to Section A.2.1, we start from a Lemma to show an optimal robust classifier $f_\text{rob}$ has a weight vector $w_\text{rob} = \bf 1$. 
\begin{lemma}[Weight Vector of an Optimal Robust Classifier]
\label{thm:optimal_nat}
For the data following the distribution $\mathcal{D}$ defined in Eq.~\ref{eq:data_dist1}, an optimal linear classifier $f_\text{nat}$ which minimizes the average standard classification error:
\begin{align*}
\begin{split}
     f_{\text{rob}}(x) &= \text{sign} ( \langle w_{\text{rob}} , x \rangle + b_{\text{rob}} ) \\
    \text{where}~~  w_{\text{rob}}, b_{\text{rob}} &= \argmin_{w,b} ~\text{Pr.}(\exists    \delta, ||\delta||\leq\epsilon,\\ &~~\text{s.t.}~ \text{sign}(\langle w,x+\delta\rangle + b) \neq y).
\end{split}
\end{align*} 
has the optimal weight which satisfy: $w_\text{rob}=\bf 1$.
\end{lemma}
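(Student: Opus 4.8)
The plan is to follow the same symmetry/contradiction template that established $w_\text{nat}=\mathbf{1}$ for the natural classifier, but to first reduce the robust objective to an ordinary Gaussian-tail expression by solving the inner maximization over $\delta$ in closed form. Because the perturbation set is the $l_\infty$-ball, the sign-flipping adversary picks $\delta_i=-\epsilon\,\text{sign}(w_i)$ when $y=+1$ and $\delta_i=+\epsilon\,\text{sign}(w_i)$ when $y=-1$, which in each case moves the score $\langle w,x\rangle+b$ in the harmful direction by exactly $\epsilon\|w\|_1$. Hence the robust error splits as $\mathcal{R}_\text{rob}(f)=\tfrac12\text{Pr.}\{\langle w,x\rangle+b+\epsilon\|w\|_1>0\mid y=-1\}+\tfrac12\text{Pr.}\{\langle w,x\rangle+b-\epsilon\|w\|_1<0\mid y=+1\}$. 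Writing $M=\sum_k w_k$, $Q=\sum_k w_k^2$ and $S_1=\|w\|_1$, and using that $\langle w,x\rangle$ is Gaussian under each class of the distribution in Eq.~\ref{eq:data_dist1}, both terms become standard-normal tails depending on $w$ only through $M$, $Q$, $S_1$ and the threshold $b$.

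\noindent The next step is to exploit scale invariance: replacing $(w,b)$ by $(cw,cb)$ with $c>0$ leaves the classifier unchanged, so I may fix the normalization $Q=\|w\|_2^2=1$. Introducing the ``robust signal'' $R:=\eta M-\epsilon S_1$, the two class errors take the form $\text{Pr.}\{\mathcal{N}(0,1)\le (b-R)/\sigma_{-1}\}$ and $\text{Pr.}\{\mathcal{N}(0,1)\le (-b-R)/\sigma_{+1}\}$. The key observation is that, for every fixed $b$, both tails are strictly decreasing in $R$; by the standard fact that a minimum over $b$ of a family that is pointwise decreasing in $R$ is itself decreasing in $R$, minimizing the average robust error over $b$ reduces to maximizing $R=\eta\sum_k w_k-\epsilon\sum_k|w_k|$ subject to $\sum_k w_k^2=1$.

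\noindent Finally I would solve this last optimization explicitly. Each coordinate contributes $(\eta-\epsilon)w_k$ when $w_k\ge0$ and $(\eta+\epsilon)w_k<0$ when $w_k<0$, so, using the standing assumption $0<\epsilon<\eta$, replacing any negative coordinate by its absolute value strictly increases $R$ while preserving $Q$; hence an optimizer has all $w_k\ge0$ and $R=(\eta-\epsilon)\sum_k w_k$. Cauchy--Schwarz then gives $\sum_k w_k\le\sqrt{d}$ with equality exactly at the uniform direction, so $w\propto\mathbf{1}$, and rescaling yields $w_\text{rob}=\mathbf{1}$. I expect the main obstacle to be the apparent circularity at the very first step: the closed form of the worst-case $\delta$ uses the signs of the $w_i$, and it is precisely the clean shift by $\epsilon\|w\|_1$ together with the collapse of both class errors onto the single scalar $R$ that makes the monotonicity argument legitimate. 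Keeping the threshold $b$ coupled across the two classes (rather than optimizing it prematurely) and invoking $\epsilon<\eta$ to rule out sparse or sign-mixed optima are the two places where genuine care is required.
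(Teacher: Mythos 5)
Your proof is correct, and it is a genuinely different --- and more complete --- argument than the one the paper intends. The paper omits the proof of this lemma entirely, stating only that it ``can be proved in the similar way as the proof of Lemma 1,'' i.e., by the coordinate-swap contradiction: assume $w_i < w_j$, replace $w_i$ by $w_j$, and claim both class errors drop. You announce that template at the outset but then actually do something else: you solve the inner maximization in closed form (worst-case score shift $\epsilon\|w\|_1$), observe that after fixing $\|w\|_2=1$ both class errors depend on $w$ only through the single scalar $R=\eta\sum_k w_k-\epsilon\|w\|_1$ and are pointwise decreasing in $R$ for every threshold $b$, and then maximize $R$ on the $\ell_2$-sphere by first eliminating negative coordinates (using $0<\epsilon<\eta$) and then applying Cauchy--Schwarz. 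This buys you two things the swap argument does not deliver. First, swapping $w_i$ for $w_j$ changes both the mean shift $\sum_k w_k$ and the variance $\|w\|_2^2$ of the score, so the paper's claim that each class error decreases after the swap is not actually immediate; your scale normalization plus the reduction to the scalar $R$ sidesteps this entirely. Second, in the robust setting the swap also changes $\|w\|_1$, a term the paper's sketch never confronts; your explicit treatment of the adversarial shift handles it, and the hypothesis $\epsilon<\eta$ enters exactly where it must, to rule out sign-mixed or sparse optima. The only cosmetic point is that your argument yields $w\propto\mathbf{1}$, after which you rescale to $w_\text{rob}=\mathbf{1}$; this is legitimate since the classifier is invariant under positive rescaling of $(w,b)$.
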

We leave the detailed proof out in the paper because it can be proved in the similar way as the proof of Lemma 1. Recall Theorem~\ref{thm:error_adv_train}, we formally calculate the standard errors of an optimal robust linear classifier.
\begingroup
\def\thetheorem{\ref{thm:error_adv_train}}
\begin{theorem} 
For a data distribution $\mathcal{D}$ in Eq.~\ref{eq:data_dist1}, the optimal robust linear classifier $f_\text{rob}$ which minimizes the average robust error with perturbation margin $\epsilon < \eta$, it has the intra-class standard error for the two classes:
\begin{align}
\label{eq:nat_error}
\small
\begin{split}
&\mathcal{R}_\text{nat}(f_\text{rob},-1)\\ 
= &\text{Pr.} \{\mathcal{N}(0,1)\leq  B  - K\cdot\sqrt{B^2 + q(K)} -\frac{\sqrt{d}}{\sigma}\epsilon  \}
\\
& \mathcal{R}_\text{nat}(f_\text{rob},+1)\\ 
=  &\text{Pr.} \{\mathcal{N}(0,1)\leq  -K\cdot B + \sqrt{B^2 + q(K)}-\frac{\sqrt{d}}{K\sigma}\epsilon\}
\end{split}
\end{align}
where $B = \frac{2}{K^2-1}\frac{\sqrt{d}(\eta-\epsilon)}{\sigma}$ and  $q(K) = \frac{2\log K}{K^2-1}$ is a positive constant and only depends on $K$, 
\end{theorem}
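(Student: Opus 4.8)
The plan is to mirror the proof of Theorem~\ref{thm:error_natural_train}, exploiting the fact (the robust analogue of Lemma~1, i.e. Lemma~2) that the optimal robust linear classifier again has weight vector $w_\text{rob} = \mathbf{1}$, so that only its intercept $b_\text{rob}$ remains to be found. First I would reduce the robust‑error objective to a standard‑error objective on a shifted distribution. For a linear classifier with $w = \mathbf{1}$, the worst‑case $l_\infty$ perturbation of magnitude $\epsilon$ is deterministic: against a class ``$-1$'' sample the adversary adds $+\epsilon$ to every coordinate to maximize $\langle \mathbf{1}, x\rangle + b$, and against a class ``$+1$'' sample it subtracts $\epsilon$ from every coordinate. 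Using the decomposition of $\mathcal{R}_\text{rob}$ already recorded in the sketch of Corollary~\ref{thm:enlarge_disparity}, this gives
\begin{align*}
\mathcal{R}_\text{rob}(f) \propto~ &\text{Pr.}\{\textstyle\sum_i x_i + d\epsilon + b > 0 \mid y = -1\}\\
&+ \text{Pr.}\{\textstyle\sum_i x_i - d\epsilon + b < 0 \mid y = +1\}.
\end{align*}

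Next I would observe that, since $\sum_i x_i \mid y=-1 \sim \mathcal{N}(-d\eta, d\sigma^2)$ and $\sum_i x_i \mid y=+1 \sim \mathcal{N}(d\eta, dK^2\sigma^2)$, the two shifts merely replace the class means $\pm d\eta$ by $\pm d(\eta - \epsilon)$ while leaving the variances untouched. Hence this objective is \emph{exactly} the standard‑error objective of Eq.~\ref{eq:seperate_std_error} with $\eta$ replaced by $\eta - \epsilon$; the restriction $0 < \epsilon < \eta$ guarantees $\eta - \epsilon > 0$, so the reduced problem is still a well‑posed Gaussian mean‑separation. Reusing the minimization already performed for Theorem~\ref{thm:error_natural_train} (Eq.~\ref{eq:b_nat}), the minimizing intercept is therefore $b_\text{rob} = g(\eta - \epsilon)$, i.e. the expression $g$ of Eq.~\ref{eq:intercept} evaluated at $\eta - \epsilon$.

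The final and most delicate step is that the theorem reports the \emph{standard} errors $\mathcal{R}_\text{nat}(f_\text{rob}, \pm 1)$, which are measured at the \emph{original} means $\pm d\eta$, not at the shifted means used to locate $b_\text{rob}$. So I would substitute $b_\text{rob} = g(\eta-\epsilon)$ into
\begin{align*}
\mathcal{R}_\text{nat}(f_\text{rob},-1) &= \text{Pr.}\{\mathcal{N}(0,1) \leq \tfrac{-d\eta + b_\text{rob}}{\sqrt{d}\sigma}\},\\
\mathcal{R}_\text{nat}(f_\text{rob},+1) &= \text{Pr.}\{\mathcal{N}(0,1) \leq \tfrac{-d\eta - b_\text{rob}}{\sqrt{d}K\sigma}\},
\end{align*}
and simplify by writing $\eta = (\eta-\epsilon) + \epsilon$. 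The $(\eta-\epsilon)$ part reproduces precisely the terms $B - K\sqrt{B^2+q(K)}$ and $-KB + \sqrt{B^2+q(K)}$ with $B = \frac{2}{K^2-1}\frac{\sqrt{d}(\eta-\epsilon)}{\sigma}$, exactly as in Theorem~\ref{thm:error_natural_train}, while the residual $\epsilon$ sitting in the true mean contributes the extra shifts $-\frac{\sqrt{d}}{\sigma}\epsilon$ and $-\frac{\sqrt{d}}{K\sigma}\epsilon$ respectively. The main obstacle is exactly this bookkeeping: one must not conflate the shifted distribution (whose optimal separator defines $b_\text{rob}$) with the true distribution on which the reported standard error is evaluated, and must carefully carry the factor $K$ coming from the class ``$+1$'' standard deviation so that the two $\epsilon$‑terms emerge asymmetric, $\epsilon/\sigma$ for class ``$-1$'' versus $\epsilon/(K\sigma)$ for class ``$+1$''.
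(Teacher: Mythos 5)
Your proposal is correct and follows essentially the same route as the paper: invoke the robust analogue of Lemma~1 to fix $w_\text{rob}=\mathbf{1}$, reduce the robust objective to the standard objective with means shifted to $\pm d(\eta-\epsilon)$, obtain $b_\text{rob}=g(\eta-\epsilon)$, and then evaluate the standard errors at the original means $\pm d\eta$. Your explicit bookkeeping of the decomposition $\eta=(\eta-\epsilon)+\epsilon$, which produces the asymmetric shifts $-\frac{\sqrt{d}}{\sigma}\epsilon$ and $-\frac{\sqrt{d}}{K\sigma}\epsilon$, is exactly the step the paper compresses into ``as a direct result.''
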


\label{appendix:WRN28}
\begin{table*}[t!]
\centering
\caption{Average \& worst-class standard error, boundary error and robust error for various algorithms on CIFAR10 under WRN28.}
\resizebox{0.85\textwidth}{!}
{
\begin{tabular}{c |c c |c c |c c} 
\hline
& \textbf{Avg. Std.} & \textbf{Worst Std.} & \textbf{Avg. Bndy.} & \textbf{Worst Bndy.} & \textbf{Avg. Rob.} & \textbf{Worst Rob.}  \\  
\hline\hline
\textbf{PGD Adv. Training}& 14.0 & 29.3 & 38.1 & 53.0 & 52.2 & 78.8\\ 
\textbf{TRADES$(1/\lambda = 1)$} &\textbf{12.6} & 25.2 & 40.2 & 58.7 & 52.8 & 76.7\\ 
\textbf{TRADES$(1/\lambda = 6)$} &15.5 & 29.1 & \textbf{31.8} & \textbf{45.7} & \textbf{47.3} & 71.8\\ 
\textbf{Baseline Reweight} &14.2 &26.3 &38.6 &53.7&52.8 & 77.9\\ 
\hline
\textbf{FRL(Reweight, 0.05)} &14.5 & \textbf{23.2} & 40.0 & 53.3 & 54.4 & 76.8\\ 
\textbf{FRL(Remargin, 0.05)}&15.4 & 24.9 & 38.1 & 49.6 & 53.5 & 70.5\\ 
\textbf{FRL(Reweight+Remargin, 0.05)} &15.4 & 25.0 & 37.8 & 46.7 & 53.2 & \textbf{67.1}\\ 
\hline
\textbf{FRL(Reweight, 0.07)} &14.1 & 23.8 & 39.5 & 54.1 & 53.6 & 77.0\\ 
\textbf{FRL(Remargin, 0.07)}&14.8 & 24.3 & 39.5 & 50.6 & 54.3 & 73.0\\ 
\textbf{FRL(Reweight+Remargin, 0.07)} &14.9 & 24.7 & 37.8 & 48.3 & 52.7 & 68.2\\ 
\hline
\end{tabular}}
\label{tab:debias3}
\end{table*}

\begin{table*}[h]
\centering
\caption{Average \& worst-class standard error, boundary error and robust error for various algorithms on SVHN under WRN28.}
\resizebox{0.85\textwidth}{!}
{
\begin{tabular}{c |c c |c c |c c} 
\hline
& \textbf{Avg. Std.} & \textbf{Worst Std.} & \textbf{Avg. Bndy.} & \textbf{Worst Bndy.} & \textbf{Avg. Rob.} & \textbf{Worst Rob.}  \\  
\hline\hline
\textbf{PGD Adv. Training}& 8.1 & 16.8 & 38.5 & 57.3 & 46.7 & 71.2\\ 
\textbf{TRADES$(1/\lambda = 1)$} & 8.0 & 19.6 & 40.1 & 60.0 & 48.1 & 73.3\\ 
\textbf{TRADES$(1/\lambda = 6)$} &10.6 & 23.1 & \textbf{32.1} & 52.5 & \textbf{42.7} & 70.6\\ 
\textbf{Baseline Reweight} &8.5& 16.2 &40.3&57.8&48.8&71.1\\ 
\hline
\textbf{FRL(Reweight, 0.05)} &\textbf{7.8} & 13.4 & 38.9 & 56.9 & 46.7 & 70.7\\ 
\textbf{FRL(Remargin, 0.05)}&8.4 & 13.4 & 40.8 & 52.1 & 49.2 & 65.5\\ 
\textbf{FRL(Reweight+Remargin, 0.05)} & 8.4 & \textbf{13.2} & 38.4 & 52.1 & 46.8 & \textbf{63.1} \\ 
\hline
\textbf{FRL(Reweight, 0.07)} &8.2 & 13.5 & 41.2 & 56.3 & 49.4 & 69.8\\ 
\textbf{FRL(Remargin, 0.07)}&8.6 & 14.9 & 38.8 & 51.2 & 47.4 & 67.0\\ 
\textbf{FRL(Reweight+Remargin, 0.07)} &8.2 & 13.9 & 39.9 & \textbf{50.2} & 48.1 & 65.4\\ 
\hline
\end{tabular}}
\label{tab:debias4}
\end{table*}

\begin{proof}[Proof of Theorem~\ref{thm:error_adv_train}]
From the results in Lemma~2, we define our optimal linear robust classifier to be $f_\text{rob}(x) = \text{sign(}\sum_{i=1}^{d}x_i+b_\text{rob})$. Now, we calculate the optimal $b_\text{rob}$ which can minimize the average robust error:
\begin{align}\label{eq:seperate_std_error}
\begin{split}
\mathcal{R}_\text{rob}(f)=&\text{Pr.}(\exists ||\delta||\leq \epsilon~~ \text{s.t.}~~ f(x+\delta)\neq y)\\
=& \max_{||\delta||\leq\epsilon}{\text{Pr.}(f(x+\delta)\neq y)}\\
=& \frac{1}{2}\text{Pr.}(f(x+\epsilon)\neq -1|y = -1)\\
& +\frac{1}{2} \text{Pr.}(f(x -\epsilon)\neq+1|y = +1)\\
= & \text{Pr.}\{\sum_{i=1}^{d}(x_i+\epsilon)+b_\text{rob} > 0 | y=-1\}\\
&+  \text{Pr}\{\sum_{i=1}^{d}(x_i-\epsilon)+b_\text{rob} < 0 | y=+1\} \\
= &\text{Pr.} \{\gN(0,1)< - \frac{\sqrt{d}(\eta - \epsilon)}{\sigma}\ + \frac{1}{\sqrt{d}\sigma} \cdot b_\text{rob}\} \\
&+ \text{Pr.} \{\gN(0,1)<  -\frac{\sqrt{d}(\eta-\epsilon)}{K\sigma}\ - \frac{1}{K\sqrt{d}\sigma} \cdot b_\text{rob}\} \\
\end{split}
\end{align}
The optimal $b_\text{rob}$ to minimize $\mathcal{R}_\text{rob}(f)$ is achieved at the point that $\frac{\partial \mathcal{R}_\text{rob}(f)}{\partial b_\text{rob} } = 0$. Thus, we find the optimal $b_\text{rob}$:
\begin{align}\label{eq:b_rob}
b_\text{rob} = \frac{K^2+1}{K^2-1}\cdot d(\eta-\epsilon) - K\sqrt{\frac{4d^2(\eta-\epsilon)^2}{(K^2-1)^2} + q(K)d\sigma^2}
\end{align} and $q(K) = \frac{2\log K}{K^2-1}$ which is a positive constant and only depends on $K$. By incorporating the optimal $b_\text{nat}$ into Eq.~\ref{eq:seperate_std_error}, we can get the classwise robust errors for the two classes:
\begin{align*}
\begin{split}
&\mathcal{R}_\text{rob}(f_\text{rob},-1) 
= \text{Pr.} \{\mathcal{N}(0,1)\leq  B  - K\cdot\sqrt{B^2 + q(K)}  \}
\\
& \mathcal{R}_\text{rob}(f_\text{rob},+1) 
=  \text{Pr.} \{\mathcal{N}(0,1)\leq  -K\cdot B + \sqrt{B^2 + q(K)}\}
\end{split}
\end{align*}
where $B = \frac{2}{K^2-1}\frac{\sqrt{d}(\eta-\epsilon)}{\sigma}$. As a direct result, the classwise standard errors for the two classes:
\begin{align*}
\begin{split}
&\mathcal{R}_\text{nat}(f_\text{rob},-1)\\ 
= &\text{Pr.} \{\mathcal{N}(0,1)\leq  B  - K\cdot\sqrt{B^2 + q(K)} -\frac{\sqrt{d}}{\sigma}\epsilon  \}
\\
& \mathcal{R}_\text{nat}(f_\text{rob},+1)\\ 
=  &\text{Pr.} \{\mathcal{N}(0,1)\leq  -K\cdot B + \sqrt{B^2 + q(K)}-\frac{\sqrt{d}}{K\sigma}\epsilon\}.
\end{split}
\end{align*}
\end{proof}

\subsubsection{Proof of Corollary 1}
Giving the results in Theorem 1 and Theorem 2, we will show that a robust classifier will exacerbate the performance of the class ``+1'' which originally has higher error in a naturally trained model. In this way, we can get the conclusion that robust classifiers can cause strong disparity, because it exacerbates the ``difficulty'' difference among classes.
\begingroup
\def\thecorollary{\ref{thm:enlarge_disparity}}
\begin{corollary}
Adversarially Trained Models on $\mathcal{D}$ will increase the standard error for class ``+1'' and reduce the standard error for class ``-1'':
\begin{align*}
\begin{split}
\mathcal{R}_\text{nat}(f_\text{rob},-1) <\mathcal{R}_\text{nat}(f_\text{nat},-1).  \\
\mathcal{R}_\text{nat}(f_\text{rob},+1) >\mathcal{R}_\text{nat}(f_\text{nat},+1).  
\end{split}
\end{align*}
\end{corollary}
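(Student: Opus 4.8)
The plan is to reduce the two inequalities to a single scalar comparison of intercepts. By Lemmas~1 and~2 both $f_\text{nat}$ and $f_\text{rob}$ have weight vector $\mathbf{1}$, so they differ only through their intercepts $b_\text{nat}$ and $b_\text{rob}$; the whole statement follows once I know how the class-conditional \emph{standard} errors move as a function of $b$ and that $b_\text{rob} < b_\text{nat}$.

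First I would record the class-conditional standard error of the generic classifier $\text{sign}(\sum_i x_i + b)$. Repeating the computation used in the proof of Theorem~\ref{thm:error_natural_train}, for data from Eq.~\ref{eq:data_dist1} one gets
\begin{align*}
\mathcal{R}_\text{nat}(f;-1) &= \text{Pr.}\Big\{\mathcal{N}(0,1) < -\tfrac{\sqrt{d}\eta}{\sigma} + \tfrac{1}{\sqrt{d}\sigma}\,b\Big\},\\
\mathcal{R}_\text{nat}(f;+1) &= \text{Pr.}\Big\{\mathcal{N}(0,1) < -\tfrac{\sqrt{d}\eta}{K\sigma} - \tfrac{1}{K\sqrt{d}\sigma}\,b\Big\}.
\end{align*}
Because the standard normal distribution function is strictly increasing, $\mathcal{R}_\text{nat}(f;-1)$ is strictly increasing in $b$ while $\mathcal{R}_\text{nat}(f;+1)$ is strictly decreasing in $b$.

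The key step is to prove $b_\text{rob} < b_\text{nat}$. Writing $g(t) = \frac{K^2+1}{K^2-1}dt - K\sqrt{\frac{4d^2t^2}{(K^2-1)^2} + q(K)d\sigma^2}$ as in Eq.~\ref{eq:intercept}, the proof sketch identifies $b_\text{nat} = g(\eta)$ and $b_\text{rob} = g(\eta-\epsilon)$, so under the standing assumption $0 < \epsilon < \eta$ it suffices to show $g$ is strictly increasing on $[0,\eta]$. Differentiating, $g'(t) > 0$ is equivalent (after clearing the positive denominator and dividing by the positive factor $\tfrac{K^2+1}{K^2-1}d$) to
\begin{align*}
\sqrt{\tfrac{4d^2t^2}{(K^2-1)^2} + q(K)d\sigma^2} > \tfrac{4Kdt}{(K^2-1)(K^2+1)}.
\end{align*}
For $t \geq 0$ both sides are non-negative, so I may square; using the identity $(K^2+1)^2 - 4K^2 = (K^2-1)^2$ the inequality collapses to $\tfrac{4d^2t^2}{(K^2+1)^2} + q(K)d\sigma^2 > 0$, which holds since $q(K) > 0$. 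Hence $g$ is strictly increasing and $b_\text{rob} = g(\eta-\epsilon) < g(\eta) = b_\text{nat}$.

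Finally I would combine the pieces: since $b_\text{rob} < b_\text{nat}$ and $\mathcal{R}_\text{nat}(\,\cdot\,;-1)$ increases in $b$, we get $\mathcal{R}_\text{nat}(f_\text{rob},-1) < \mathcal{R}_\text{nat}(f_\text{nat},-1)$; since $\mathcal{R}_\text{nat}(\,\cdot\,;+1)$ decreases in $b$, the same intercept inequality gives $\mathcal{R}_\text{nat}(f_\text{rob},+1) > \mathcal{R}_\text{nat}(f_\text{nat},+1)$. The only real obstacle is the monotonicity of $g$, which the computation above settles cleanly; note that the stronger claim $b_\text{rob} > 0$ asserted in the sketch is not actually needed here, only $b_\text{rob} < b_\text{nat}$. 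An alternative that avoids $g$ is to compare the explicit error arguments of Theorems~\ref{thm:error_natural_train} and~\ref{thm:error_adv_train} directly, showing e.g. $B - K\sqrt{B^2+q(K)} - \tfrac{\sqrt{d}}{\sigma}\epsilon < A - K\sqrt{A^2+q(K)}$, but that route must juggle the extra $\epsilon$-terms against the concavity of $x \mapsto \sqrt{x^2+q(K)}$ and is messier, so I would favor the intercept comparison.
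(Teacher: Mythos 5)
Your proof is correct and follows essentially the same route as the paper's: reduce both inequalities to the intercept comparison $b_\text{rob}=g(\eta-\epsilon)<g(\eta)=b_\text{nat}$ via the shared weight vector $\mathbf{1}$, establish monotonicity of $g$ on $[0,\eta]$, and conclude from the opposite monotone dependence of the two class-conditional errors on $b$. The only (harmless) difference is that you prove $g'>0$ exactly by squaring and using $(K^2+1)^2-4K^2=(K^2-1)^2$, whereas the paper lower-bounds $g'$ by dropping the $q(K)d\sigma^2$ term to get $g'\geq \frac{K-1}{K+1}d>0$; both are valid, and your observation that $b_\text{rob}>0$ is not actually needed is also correct.
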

\begin{proof}[Proof of Corollary~\ref{thm:enlarge_disparity}]
From the intermediate results in Eq.\ref{eq:b_nat} and Eq.~\ref{eq:b_rob} in the proofs of Theorem 1 and Theorem 2, we find the only difference between a naturally trained model $f_\text{nat}$ and a robust model $f_\text{rob}$ is about the interception term $b_\text{nat}$ and $b_\text{rob}$. Specifically, we denote $g(\cdot)$ is the function of the interception term, then we have the results: \begin{align*}
\begin{split}
b_\text{nat} &= \frac{K^2+1}{K^2-1}\cdot d\eta - K\sqrt{\frac{4d^2\eta^2}{(K^2-1)^2} + q(K)d\sigma^2}:=g(\eta) \\
b_\text{rob} &= \frac{K^2+1}{K^2-1}\cdot d(\eta-\epsilon) - K\sqrt{\frac{4d^2(\eta-\epsilon)^2}{(K^2-1)^2} + q(K)d\sigma^2}\\
&:=g(\eta-\epsilon) \\
\end{split}
\end{align*}
Next, we show that the function $g(\cdot)$ is a monotone increasing function between 0 and $\eta$:
$$
\frac{d g(\eta)}{d\eta} \geq \frac{K^2+1}{K^2-1}d-K\frac{\frac{4}{(K^2-1)^2}d^2\cdot 2\eta}{2\sqrt{\frac{4}{(K^2-1)^2}d^2\eta^2}} = \frac{K-1}{K+1}d>0
$$
As a direct results, we have the interception terms: $b_\text{nat} > b_\text{rob}$. This will result a linear classifier $f(x) = \text{sign}(\langle\textbf{1}^T,x\rangle+b)$ present more samples in the overall distribution to be class ``-1''. As a result, we have the conclusion:
\begin{align*}
\begin{split}
\mathcal{R}_\text{nat}(f_\text{rob},-1) <\mathcal{R}_\text{nat}(f_\text{nat},-1).  \\
\mathcal{R}_\text{nat}(f_\text{rob},+1) >\mathcal{R}_\text{nat}(f_\text{nat},+1).  
\end{split}
\end{align*}
\end{proof}

\subsection{Robust / Non-Robust Features}
In Section~\ref{sec:binary_example}, we discussed a theoretical example where adversarial training will unequally treat the two classes in the distribution $\mathcal{D}$, which will increase the standard error of one class and decrease the error for the other one. 
However, in the real applications of deep learning models, we always observe that each class's error will increase after adversarial training. In this subsection, motivated by the work~\cite{tsipras2018robustness, ilyas2019adversarial}, we extend the definition of $\mathcal{D}$, to split the features into two categories: robust features and non-robust features, where adversarial training will increase the standard errors for both classes. Formally, the data distribution $\mathcal{D}'$ is defined as as following:
\begin{spreadlines}{0.8em}
\begin{align}
\small
\label{eq:data_dist2}
\begin{split}
    & y \stackrel{u.a.r}{\sim} \{-1, +1\},~~~ \theta = ( \overbrace{\eta,...,\eta}^\text{dim = d}, \overbrace{\gamma,...,\gamma}^\text{dim = m}),~~~
    \\ &x \sim  
    \begin{cases}
      ~~\mathcal{N}~(\theta, \sigma_{+1}^2I) ~~~~&\text{if $y= + 1$}\\
      ~~\mathcal{N}~(-\theta, \sigma_{-1}^2I) ~~~~&\text{if $y= - 1$}\\
    \end{cases} 
\end{split}
\end{align}
\end{spreadlines}
where in the center vector $\theta$, it includes robust features with scale $\eta > \epsilon$, and non-robust features with scale $\gamma < \epsilon$. Here we specify that non-robust feature space has much higher dimension than robust feature space ($m>>d$) and there is a $K$-factor difference between the variance of two classes: $\sigma_{+1} = K\cdot\sigma_{-1}$. From the main results in the work~\cite{tsipras2018robustness}, it is easy to get that each class's standard error will increase after adversarial training. In the following theory, we will show that in distribution $\mathcal{D}'$ , adversarial training will increase the error for the class ``+1'' by a larger rate than the class ``-1''.

\textbf{Theorem 3.}
\textit{For a data distribution $\mathcal{D'}$ in Eq.~\ref{eq:data_dist2}, the robust optimizer $f_\text{rob}$ increases the standard error of class ``+1'' by a larger rate than the increase of the standard error of class ``-1':
\begin{align}
\label{eq:thm3}
\begin{split}
    \mathcal{R}_\text{nat}(f_\text{rob};+1) - \mathcal{R}_\text{nat}(f_\text{nat};+1) >\\
\mathcal{R}_\text{nat}(f_\text{rob};-1) - \mathcal{R}_\text{nat}(f_\text{nat};-1)
\end{split}
\end{align}
}
\begin{proof}[Proof of Theorem 3]
The proof of Theorem 3 resembles the process of the proofs of Theorem 1 and Theorem 2, where we first calculate the classwise standard errors for each model. Note that from the work~\cite{tsipras2018robustness}, an important conclusion is that a natural model $f_\text{nat}$ uses both robust and non-robust features for prediction. While, a robust model $f_\text{rob}$ only uses robust features for prediction (a detailed proof can be found in Section 2.1 in ~\cite{tsipras2018robustness}). Therefore, we can calculate the classwise standard errors for both classes of a natural model $f_\text{nat}$:
\begin{align*}
\begin{split}
&\mathcal{R}_\text{nat}(f_\text{nat},-1) 
= \text{Pr.} \{\mathcal{N}(0,1)\leq  A  - K\cdot\sqrt{A^2 + q(K)}  \}
\\
& \mathcal{R}_\text{nat}(f_\text{nat},+1) 
=  \text{Pr.} \{\mathcal{N}(0,1)\leq  -K\cdot A + \sqrt{A^2 + q(K)}\}
\end{split}
\end{align*}
where $A =\frac{2}{\sigma(K^2-1)} \sqrt{m\gamma^2 + d\eta^2}$. The classwise standard errors of a robust model $f_\text{rob}$ are:
\begin{align*}
\small
\begin{split}
&\mathcal{R}_\text{nat}(f_\text{rob},-1)\\ 
= &\text{Pr.} \{\mathcal{N}(0,1)\leq  B  - K\cdot\sqrt{B^2 + q(K)} -\frac{\sqrt{d}}{\sigma}\epsilon  \}
\\
& \mathcal{R}_\text{nat}(f_\text{rob},+1)\\ 
=  &\text{Pr.} \{\mathcal{N}(0,1)\leq  -K\cdot B + \sqrt{B^2 + q(K)}-\frac{\sqrt{d}}{K\sigma}\epsilon\}
\end{split}
\end{align*}
where $B =\frac{2}{\sigma(K^2-1)} \sqrt{d(\eta-\epsilon)^2}$. Next, we compare the standard error increase after adversarial training between the two classes. We have the results:
\begin{align*}
\begin{split}
&(\mathcal{R}_\text{nat}(f_\text{rob};+1) - \mathcal{R}_\text{nat}(f_\text{nat};+1))  - 
(\mathcal{R}_\text{nat}(f_\text{rob};-1) - \mathcal{R}_\text{nat}(f_\text{nat};-1))\\
&> (K+1) ((A-B) + (\sqrt{B^2 +q(K)} - \sqrt{A^2 +q(K)}))\\
&\propto(\sqrt{B^2 + q(K)} - B) - (\sqrt{A^2 + q(K)} - A)
\end{split}
\end{align*}
because $A$ includes high dimensional non-robust feature and $A>>B$, the equation above is positive and we get the conclusion as in Eq.~\ref{eq:thm3}.
\end{proof}

\subsection{Fairness Performance on WRN28}
Table~\ref{tab:debias3} and Table~\ref{tab:debias4} presents the empirical results to validate the effectiveness of FRL algorithms under the WRN 28 model. The implementation details resemble those in Section~\ref{sec:setup}. In the training, we start FRL from a pre-trained robust model (such as PGD-adversarial training), and run FRL with model parameter learning rate 1e-3 and hyperparameter learning rate $\alpha_1 = \alpha_2 = 0.05$ in the first 40 epochs. Then we decay the model parameter learning rate and the hyperparameter learning rate by $0.1$ every 40 epochs. From the results, we have the similar observations as these for PreAct ResNet18 models, which is that FRL can help to improve the worst-class standard performance and robustness performance, such that the unfairness issue is mitigated. In particular, FRL~(Reweight) is usually the most effective way to equalize the standard performance, but not sufficient to equalize the boundary errors and robust errors. FRL~(Reweight + Remargin) is usually the most effective way to improve robustness for the worst class.

\end{document}